\newtheorem{definition}{Definition}
\newtheorem{prop}{Proposition}
\begin{document}
\title{MMP++: Motion Manifold Primitives with Parametric Curve Models}
\author{Yonghyeon Lee$^1$
\thanks{$^1$Center for AI and Natural Sciences (CAINS), Korea Institute for Advanced Study (KIAS), Seoul, South Korea, ylee@kias.re.kr}
\thanks{The paper will appear in the {\it IEEE Transactions on Robotics}. © 20xx IEEE.  Personal use of this material is permitted.  Permission from IEEE must be obtained for all other uses, in any current or future media, including reprinting/republishing this material for advertising or promotional purposes, creating new collective works, for resale or redistribution to servers or lists, or reuse of any copyrighted component of this work in other works. IEEE Xplore: \href{https://ieeexplore.ieee.org/document/10637485/}{LINK}. Digital Object Identifier (DOI): 10.1109/TRO.2024.3444068.}
}



\maketitle

\begin{abstract}
Motion Manifold Primitives (MMP), a manifold-based approach for encoding basic motion skills, can produce diverse trajectories, enabling the system to adapt to unseen constraints. Nonetheless, we argue that current MMP models lack crucial functionalities of movement primitives, such as temporal and via-points modulation, found in traditional approaches. This shortfall primarily stems from MMP's reliance on discrete-time trajectories.
To overcome these limitations, we introduce Motion Manifold Primitives++ (MMP++), a new model that integrates the strengths of both MMP and traditional methods by incorporating parametric curve representations into the MMP framework. Furthermore, we identify a significant challenge with MMP++: performance degradation due to geometric distortions in the latent space, meaning that similar motions are not closely positioned. To address this, Isometric Motion Manifold Primitives++ (IMMP++) is proposed to ensure the latent space accurately preserves the manifold's geometry.
Our experimental results across various applications, including 2-DoF planar motions, 7-DoF robot arm motions, and SE(3) trajectory planning, show that MMP++ and IMMP++ outperform existing methods in trajectory generation tasks, achieving substantial improvements in some cases. Moreover, they enable the modulation of latent coordinates and via-points, thereby allowing efficient online adaptation to dynamic environments.

\end{abstract} 

\begin{IEEEkeywords}
Movement Primitives, Manifold, Isometric Representation Learning, Autoencoders, Riemannian Geometry
\end{IEEEkeywords}

\section{Introduction}
Developing ``good'' mathematical models for representing basic motion skills continues to be a central focus in the literature on learning from demonstration~\cite{argall2009survey, saveriano2021dynamic, zhu2018robot}. In this paper, we adopt the view that a good model should be capable of generating diverse trajectories that can complete the given task. Moreover, it should be easily adaptable to a new, unseen constraint. For instance, if an unseen obstacle blocks the initially planned path, the model should enable a robot to avoid that obstacle while still accomplishing the task. We aim to train such a model using multiple demonstration trajectories. Challenges often arise from the small dataset size, high dimensionality of the trajectory data, and the multi-modality of data distribution.

Adopting the motion manifold hypothesis~\cite{arvanitidis2017latent, lee2023geometric} -- which assumes that a set of high dimensional trajectory data lies on some lower-dimensional manifold --, recent {\it Motion Manifold Primitives (MMP)} framework provides motion primitive models that can encode and generate, for a given task, a continuous manifold of trajectories each of which is capable of accomplishing the task~\cite{noseworthy2020task, lee2023equivariant}. 
This framework has demonstrated promising results in addressing the aforementioned challenges, effectively reducing the data dimensionality and capturing multi-modality. 
In particular, adjusting the low-dimensional latent coordinate values enables the adaptation of trajectories to unseen environments.

\begin{figure}[!t]
    \centering
    \includegraphics[width=0.9\linewidth]{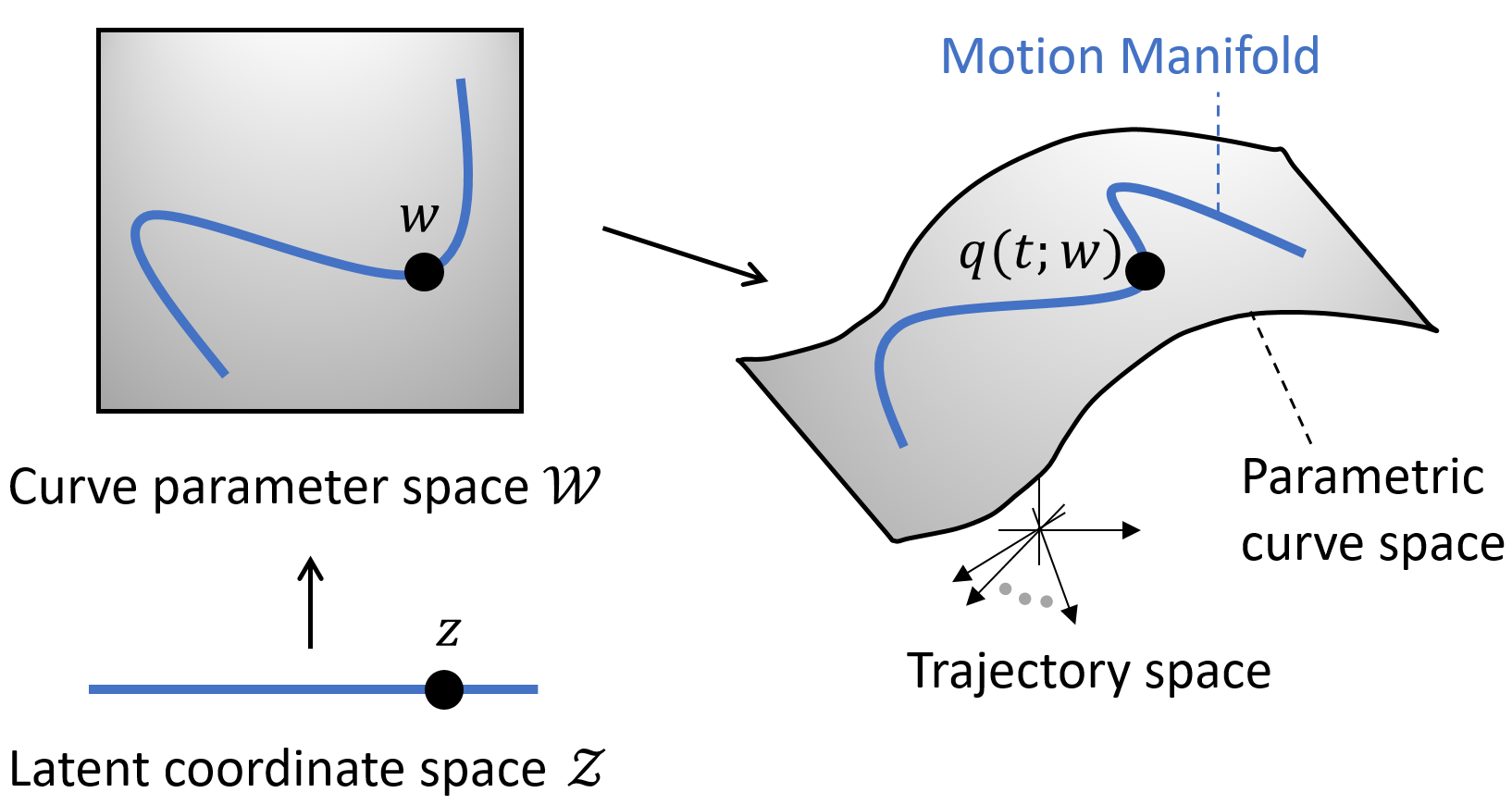}
    \caption{{\it MMP++}: A latent coordinate space ${\cal Z}$ is mapped to a subspace of the curve parameter space ${\cal W}$; the parameter space ${\cal W}$ is mapped to a subspace of the infinite-dimensional trajectory space.
    The motion manifold and parametric curve space are visualized as a curve and surface, not because their actual dimensions are one and two, but only to indicate the relative size relationships of their dimensions.} 
    \label{fig:intro2}
    \vspace{-5pt}
\end{figure}

These manifold-based models, however, lack some of the desired functionalities of movement primitives found in other conventional methods such as DMP~\cite{schaal2007dynamics, ijspeert2013dynamical}, ProMP~\cite{paraschos2013probabilistic}, and VMP~\cite{zhou2019learning}. These functionalities include: (i) temporal modulation to enable faster or slower execution of the movement and (ii) modulation of via-points (e.g., start and goal points) given new task constraints.
The fundamental reason for the absence of such functions in the MMP framework is its reliance on discrete-time trajectory representations.
In contrast, conventional movement primitives often employ parametric models for trajectory representation. For example, one of the simplest forms of these models is the linear basis function model, for a configuration space ${\cal Q}=\mathbb{R}^{n}$, expressed as:
\begin{equation}
    q(\tau; w) = \sum_{i=1}^{B} \phi_i(\tau) w_i \:\:\: \text{for} \:\:\: \tau \in [0,1],
\end{equation}
where $\{\phi_i(\tau)\}_{i=1}^{B}$ is a set of some scalar-valued basis functions in $[0,1]$ and $w_i \in \mathbb{R}^{n}, i=1,\ldots,B$ are curve parameters. 
Temporal modulation can be achieved by modifying $\tau$ as a function of time $t$, and adding some structures to the basis function $\phi_i(\tau)$ can enforce constraint satisfaction (e.g., $\phi_i(\tau):=\tau (1-\tau) b_i(\tau)$ enforces $\phi_i(0)=\phi_i(1)=0$ for any function $b_i(\tau)$).

In this paper, we propose applying the MMP framework to the parametric curve representations of trajectories, thus simultaneously tackling the challenge of dimensionality and achieving the desired functionalities, denoted as {\it MMP++} (see Fig.~\ref{fig:intro2}). 
Additionally, parametric curve representations in MMP++ lead to several other advantages. First, motions have bounded accelerations and jerks, avoiding sudden and abrupt changes. Second, the dimension of the parametric curve space -- which is equal to the dimension of the parameter space -- is generally much smaller than the dimension of the discrete-time trajectory data space, reducing the complexity of the subsequent motion manifold learning problem.

\begin{figure*}[!t]
    \centering
    \includegraphics[width=1\textwidth]{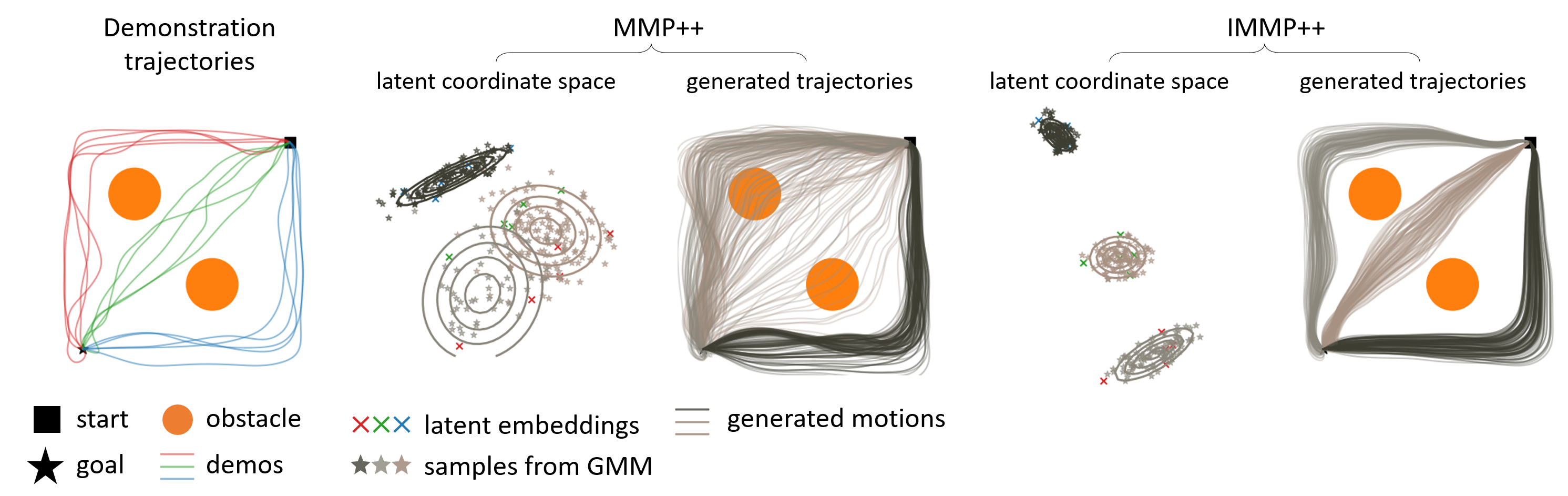}
    \caption{\textit{Left}: There are 15 demonstration trajectories (red, green, and blue trajectories) that travel from the start to the goal, avoiding the obstacle. \textit{Middle and Right}: MMP++ and IMMP++ learn two-dimensional manifolds in the curve parameter space and produce two-dimensional latent coordinate spaces. Latent values of the demonstration trajectories are visualized in the latent coordinate spaces, marked as $\times$. GMMs of three components are fitted in the latent spaces, and the sampled points are visualized as stars $*$. The corresponding generated trajectories are also visualized.} 
    \label{fig:intro}
    \vspace{-5pt}
\end{figure*}


The vanilla MMP++, a naive application of the MMP framework to the curve parameter space, however, sometimes results in a {\it geometrically distorted} latent coordinate space -- where similar motion data are not positioned close to each other --, leading to a generation of motions that violate the task constraint. 
For example, consider an example shown in Fig.~\ref{fig:intro}.
We learn a two-dimensional manifold and its latent coordinate space, by using the red, green, and blue demonstration trajectories and their parametric curve representations. 
Then we fit a Gaussian Mixture Model (GMM) using the latent values of the trajectories.
As illustrated in Fig.~\ref{fig:intro} (\textit{Middle}), due to the geometric distortion in the latent coordinate space, the same color trajectories are not located close enough to each other. Consequently, the red and green latent points are not correctly clustered by the GMM, and many of the generated motions collide with the obstacle, failing to accomplish the task.

In this paper, adopting the isometric regularization method from~\cite{yonghyeon2022regularized}, we propose to learn {\it a geometry-preserving latent coordinate space}, so that similar trajectories can be located nearby in the latent space. 
To employ this method in our context, we need to specify a {\it Riemannian metric} for the curve parameter space that serves as the basis for determining the notion of closeness in the parameter space. 
We propose a {\it CurveGeom Riemannian metric} for the curve parameter space, that reflects the geometry of the trajectory space, given a parametric curve model that satisfies some mild regularity conditions.
We call this framework {\it Isometric Motion Manifold Primitives++ (IMMP++)}; see Fig.~\ref{fig:intro} ({\it Right}).

In the first part of our experiments, we focus on Euclidean configuration space cases and use affine curve models, similar to those in ProMP~\cite{paraschos2013probabilistic} and VMP~\cite{zhou2019learning}. This induces constant CurveGeom metrics and leads to simpler implementations of the isometric regularization. Experiments involving 2-DoF planar obstacle-avoiding motions and 7-DoF collision-free motions of a robot arm confirm that our manifold-based methods, MMP++ and IMMP++, outperform conventional movement primitives in trajectory generation. Notably, we verify that the modulation of latent coordinates and via-points leads to diverse trajectory generation and enables the online adaptation of trajectories in dynamic environments.  In the second part, we demonstrate how to extend our framework to SE(3) trajectory data with the water-pouring demonstration trajectory dataset.
\section{Related Works}
\label{sec:rw}

\subsection{Movement Primitives}
Movement Primitives are mathematical models that encode and generate motions or trajectories. Conventional methods for movement primitives can be roughly divided into two categories: (i) dynamical system-based approaches, such as Dynamic Movement Primitives~\cite{ijspeert2013dynamical, schaal2007dynamics, pervez2017novel, fanger2016gaussian, umlauft2017bayesian, pervez2017learning} and Stable Dynamical Systems~\cite{khansari2011learning, neumann2013neural, neumann2015learning, blocher2017learning, sindhwani2018learning, kolter2019learning}; and (ii) parametric or non-parametric probabilistic modeling of trajectories~\cite{paraschos2013probabilistic, zhou2019learning, huang2019kernelized}. Each of these models possesses its own characteristics. Dynamical system-based methods typically ensure the stability of the resulting closed-loop systems. Approaches based on stable autonomous dynamical systems provide temporal and spatial robustness to perturbations. Parametric curve models offer adaptability in terms of temporal modulation and changes in constraints (e.g., goal points). 

A primary challenge in most of these methods is the limited adaptability to diverse situations (e.g., when unforeseen obstacles or new constraints emerge), which is mainly due to their design for encoding and producing a single trajectory for a given task~\cite{lee2023equivariant}.  
This becomes problematic when this trajectory becomes infeasible by unforeseen environmental changes, such as the sudden appearance of an obstacle. 
While dynamical system-based approaches can incorporate mechanisms like obstacle avoidance potential functions~\cite{park2008movement, hoffmann2009biologically, khansari2012dynamical, ginesi2019dynamic}, these adaptations may inadvertently breach other task-related constraints. Therefore, for motion primitives to be truly adaptable, a strategy that can encode various trajectories for the same task is critical. 
Our work adopts the motion manifold primitives framework~\cite{noseworthy2020task, lee2023equivariant} to encode and generate diverse trajectories or even a continuous manifold of trajectories, producing highly adaptable primitives, simultaneously inheriting advantages of the parametric curve models utilized in~\cite{paraschos2013probabilistic, zhou2019learning}.

\subsection{Manifold-based Movement Primitives}
Recently, manifold-based representations of basic motion skills have shown promising results in encoding diverse motions and producing adaptable primitives. These approaches can be categorized into two types. The first approach attempts to learn a sub-manifold in the configuration space ${\cal Q}$, a manifold of configurations, where the latent value $z$ maps to a configuration in ${\cal Q}$~\cite{beik2021learning}. To generate a trajectory, this approach computes a geodesic connecting two points in ${\cal M}$. 

The second type learns a low-dimensional manifold of trajectories, referred to as a motion manifold, where each point in the latent space $z$ corresponds to a trajectory $q(t) \in {\cal Q}$~\cite{noseworthy2020task,lee2023equivariant}.
Our method extends the latter approach. 
While existing methods map the latent point $z$ to a discrete-time trajectory representation $(q_1,\ldots, q_T)$, we use parametric curve models $q(t, w)$, in which the latent point $z$ is mapped to the curve parameter $w$. 
As a result, our extended version can also be interpreted as learning a sub-manifold in ${\cal Q}$, similar to the first type approach, since $(t,z)$ is now mapped to a point $q(t, w(z))$ in ${\cal Q}$.

Motion manifold primitives can produce diverse trajectories, and their ability to adapt to unseen obstacles by finding a latent value that generates a collision-free trajectory has been verified~\cite{lee2023equivariant}. However, due to the discrete nature of trajectory representation, they have limited adaptability to a dynamically changing environment. In our extended version, trajectories are parameterized by time, enabling online adaptation to dynamic environments.

\subsection{Manifold Learning and Latent Space Distortion} 
An autoencoder framework and its variants have received a lot of attention as effective methods to learn the manifold and its coordinate chart simultaneously, including but not limited to~\cite{lee2021neighborhood, kingma2013auto, creswell2017adversarial, rifai2011contractive, yonghyeon2022regularized, lee2023explicit, nazari2023geometric, yoon2021autoencoding,janggeometrically,lee2022statistical}. Of particular relevance to our paper, a geometric perspective on autoencoders has been eloquently presented in~\cite{lee2023geometric}.  
In our paper, a significant aspect of concern is the presence of the geometric distortion within the latent space of the autoencoder, as highlighted in~\cite{shao2018riemannian, arvanitidis2017latent, yonghyeon2022regularized, lee2023geometric, nazari2023geometric}. 
A recent regularization method~\cite{yonghyeon2022regularized} has developed a method to find the one that minimizes the geometric distortion, i.e., preserves the geometry of the data manifold and the latent coordinate space.
While Euclidean metric is assumed in~\cite{yonghyeon2022regularized}, in this work, we propose to use a pullback Riemannian metric for the curve parameter space that reflects the geometry of the curve space. 

\section{Geometric Preliminaries}
\label{sec:rgpc}
In this section, we review some basic concepts in differential geometry that serve as cornerstones for our method. We refer to standard differential geometry textbooks for more details~\cite{do1992riemannian,fecko2006differential}.

\subsection{Riemannian Manifolds}

\begin{figure}[!t]
    \centering
    \includegraphics[width=\linewidth]{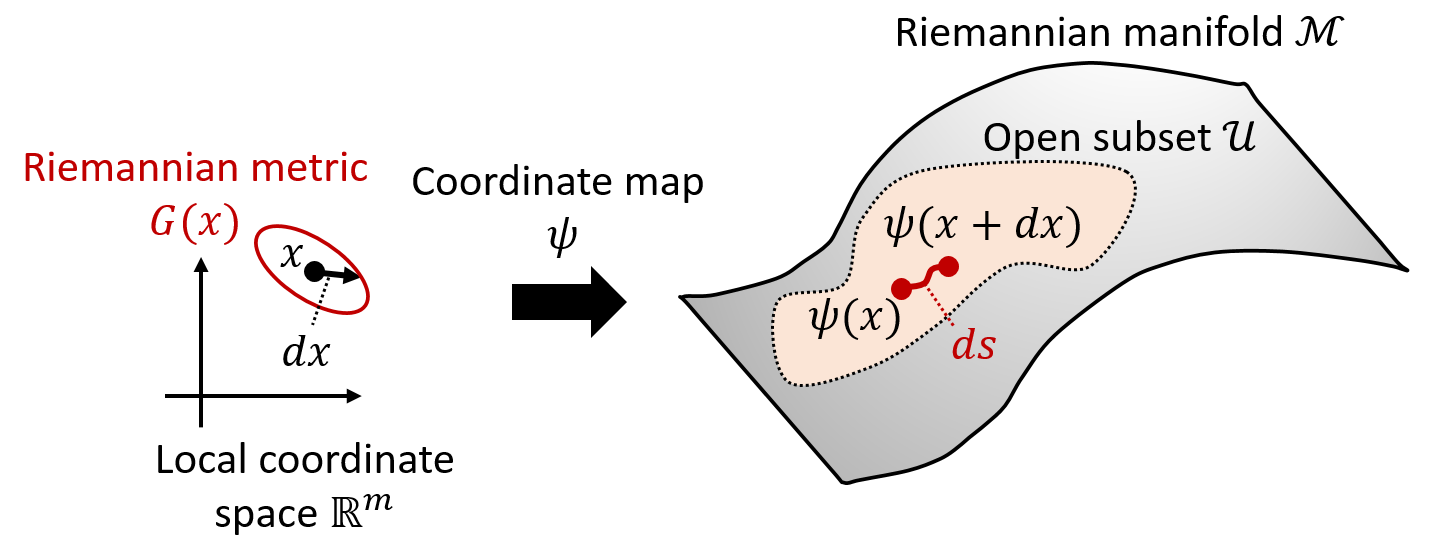}
    \caption{A local coordinate system for an $m$-dimensional Riemannian manifold ${\cal M}$. The Riemannian metric at coordinates $x$, $G(x)$, is visualized as a red equidistant ellipse that is $\{y \in \mathbb{R}^{m} \: | \: (y-x)^T G(x) (y-x) = {\rm constant}\}$.}
    \label{fig:gp1}
\end{figure}
A smooth manifold ${\cal M}$ equipped with a positive-definite inner product on the tangent space at each point is called a {\it Riemannian manifold}, and the family of inner products is called a {\it Riemannian metric}.
Given an $m$-dimensional Riemannian manifold ${\cal M}$ and its local coordinates $x \in \mathbb{R}^{m}$ -- when using local coordinates, we implicitly assume there exists a local coordinate map $\psi:\mathbb{R}^{m} \to {\cal U} \subset {\cal M}$ --, the Riemannian metric at $x$ can be expressed as an $m\times m$ positive-definite matrix denoted by $G(x) \in \mathbb{R}^{m \times m}$; see Fig.~\ref{fig:gp1}. This defines several geometric notions on ${\cal M}$, such as lengths, angles, and volumes. For example, given an infinitesimal displacement vector $dx \in \mathbb{R}^{m}$, its squared length is defined as follows:
\begin{equation}
    ds^2 = dx^T G(x) dx = \sum_{i,j=1}^{m} g_{ij}(x)dx^i dx^j,
\end{equation}
where $\{g_{ij}(x)\}$ is an index notation of the matrix $G(x)$ and $dx = (dx^1, \ldots, dx^m)$.

\subsection{Immersion and Embedding}
\label{sec:iande}
\begin{figure}[!t]
    \centering
    \includegraphics[width=\linewidth]{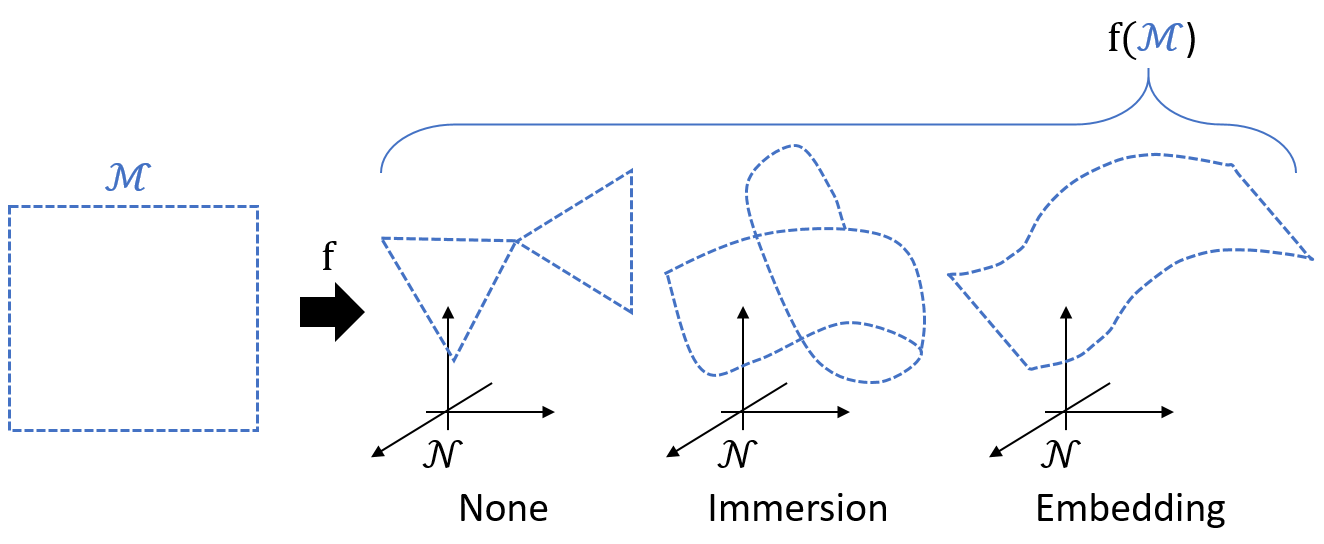}
    \caption{An illustration of immersion and embedding between two manifolds ${\cal M}$ and ${\cal N}$.}
    \label{fig:gp2}
    \vspace{-5pt}
\end{figure}
Consider two differentiable manifolds, an $m$-dimensional manifold ${\cal M}$ and $n$-dimensional manifold ${\cal N}$, with their respective coordinates $x\in\mathbb{R}^{m}$ and $y \in \mathbb{R}^{n}$. A differentiable mapping ${\rm f}:{\cal M} \to {\cal N}$ is an {\it immersion} if its differential is an injective function at every point in ${\cal M}$. Representing the mapping in local coordinates as $f:\mathbb{R}^{m} \to \mathbb{R}^{n}$, equivalently, $f$ is an immersion if its Jacobian matrix
\begin{equation}
    J(x):=\frac{\partial f}{\partial x}(x) \in \mathbb{R}^{n \times m}
\end{equation}
has constant rank equal to $\text{dim}({\cal M})=m$ at every point. Intuitively, for $f$ to be an immersion, the output manifold dimension $n$ must be greater or equal to the dimension of ${\cal M}$. 
A smooth {\it embedding} is an injective immersion ${\rm f}:{\cal M} \to {\cal N}$ such that ${\cal M}$ is diffeomorphic to its image ${\rm f}({\cal M}) \subset {\cal N}$\footnote{A manifold ${\cal A}$ is diffeomorphic to another manifold ${\cal B}$ if there exists a differentiable map between ${\cal A}$ and ${\cal B}$ such that its inverse exists and is differentiable as well.}. Specifically, when the domain manifold is compact, a smooth embedding is equivalent to an injective immersion. Then, ${\rm f}({\cal M})$ is called an {\it embedded manifold} in ${\cal N}$; see Fig.~\ref{fig:gp2}. 


\subsection{Riemannian Geometry of Parametric Curves}
\label{sec:rgpc}

\begin{figure}[!t]
    \centering
    \includegraphics[width=\linewidth]{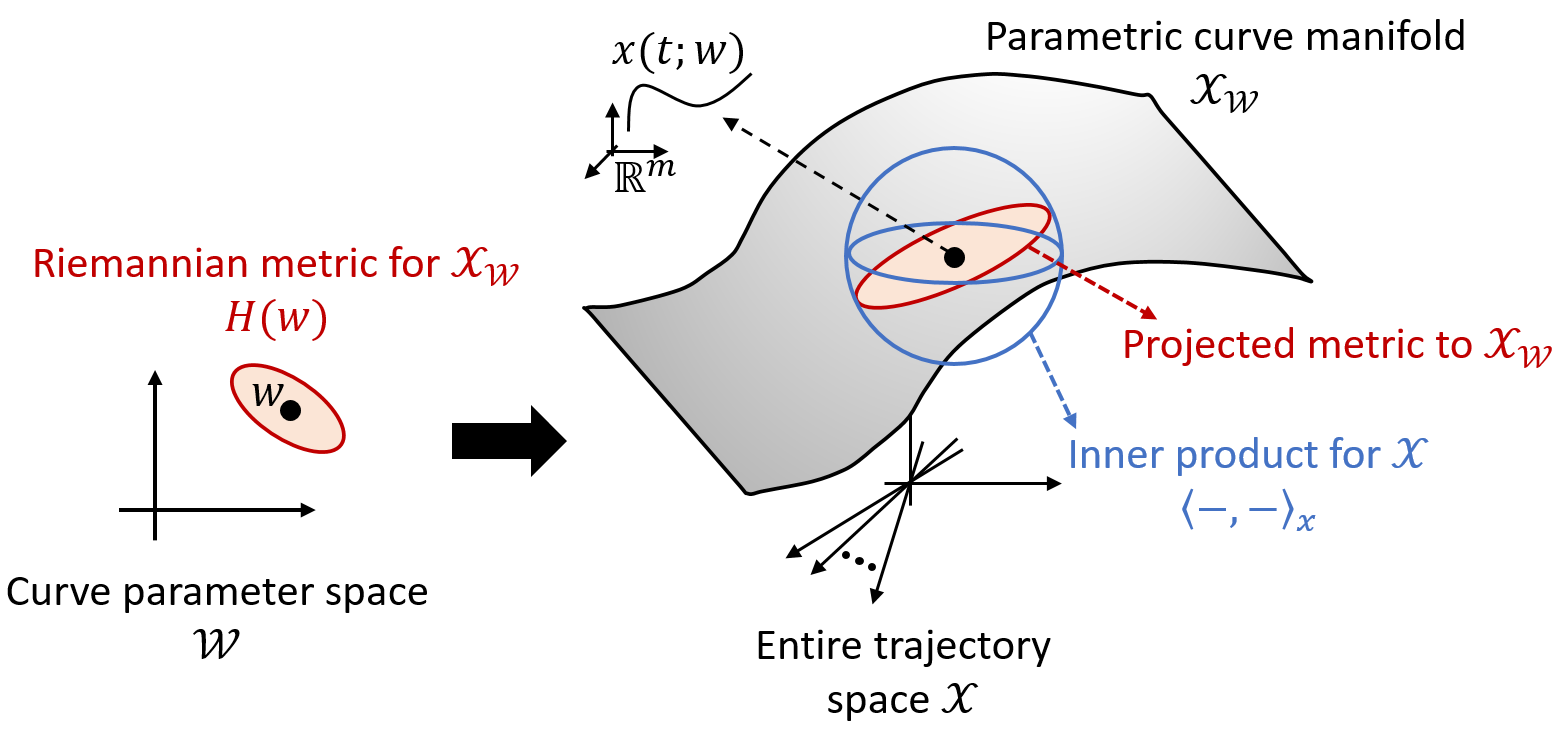}
    \caption{An illustration of Riemannian geometry of the parametric curve manifold ${\cal X}_{\cal W}$.}
    \label{fig:gp3}
    \vspace{-5pt}
\end{figure}

In this paper, we are particularly interested in Riemannian geometry of manifold of parametric curves. This section introduces how to define a Riemannian metric for the parametric curve manifold; see Fig.~\ref{fig:gp3}.

Let ${\cal M}$ be an $m$-dimensional Riemannian manifold with its local coordinates $x \in \mathbb{R}^{m}$ and Riemannian metric $G(x)$. Consider a smooth curve in ${\cal M}$ expressed as $x:[0, T] \to \mathbb{R}^{m}$ in coordinates, where its velocity norm is defined as $\|\dot{x}\|:=\sqrt{\dot{x}^T G(x) \dot{x}}$. 
The space of all smooth curves is considered an infinite-dimensional function space ${\cal X}$ with an inner product defined as follows: $\langle v, w \rangle_x:= \int_{0}^{T} v(t)^T G(x(t)) w(t) \ dt$ for two square-integrable functions $v,w:[0,T] \to \mathbb{R}^{m}$ (i.e., ${\cal X}$ is a Hilbert space). 

Of particular relavance to this paper is a parametric curve $x(t; w)$ where $w \in {\cal W}$ denotes the parameter of the curve and ${\cal W} \subset \mathbb{R}^{n}$. Consider the set of all parametric curves ${\cal X}_{{\cal W}} := \{x(\cdot;w) \in {\cal X} \ | \ w \in {\cal W}\}$. This space is a $n$-dimensional smooth manifold under the following conditions:
\begin{prop}
\label{prop:manifold}
    Suppose a curve $x(t;w)$ is smooth in both $t$ and $w$ and $x(t;\cdot): {\cal W} \to {\cal X}$ is injective, i.e., if $x(t;w_1)=x(t;w_2)$ for all $t \in [0, T]$, then $w_1 = w_2$. Let $w=(w^1,\ldots, w^n)$ and $v=(v^1,\ldots, v^n)\in \mathbb{R}^{n}$, if 
    \begin{equation}
    \label{eq:immersion}
        \sum_{i=1}^{D} \frac{\partial x(t;w)}{\partial w^{i}} v^{i} = 0 \:\:\: \implies \:\:\: v=0
    \end{equation}
    for all $w \in {\cal W}$ and ${\cal W}$ is compact, then ${\cal X}_{{\cal W}}$ is an $n$-dimensional smooth manifold. 
\end{prop}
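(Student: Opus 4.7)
The plan is to recognize ${\cal X}_{\cal W}$ as the image of the parametrization map $\Phi: {\cal W} \to {\cal X}$ defined by $\Phi(w) := x(\,\cdot\,;w)$, and to show that $\Phi$ is a smooth embedding from the $n$-dimensional domain ${\cal W}\subset\mathbb{R}^n$ into the Hilbert space ${\cal X}$ of square-integrable curves. Once embedding is established, the image $\Phi({\cal W})={\cal X}_{\cal W}$ is, by the standard embedded submanifold theorem, an $n$-dimensional smooth submanifold of ${\cal X}$, which is the claim.

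First I would verify smoothness of $\Phi$. Since $x(t;w)$ is smooth jointly in $(t,w)$ on a compact domain, the partial derivatives $\partial x/\partial w^i$ exist, are continuous in $t$, and depend smoothly on $w$; hence $w\mapsto x(\,\cdot\,;w)$ is smooth as a map into the Hilbert space ${\cal X}$ (the difference quotients converge in the $L^2$ sense, uniformly in $w$ on compact sets, by dominated convergence). Next, injectivity of $\Phi$ is exactly the stated hypothesis: $x(t;w_1)=x(t;w_2)$ for all $t$ forces $w_1=w_2$.

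Then I would check that $\Phi$ is an immersion. The differential $d\Phi_w:T_w{\cal W}\cong\mathbb{R}^n \to T_{\Phi(w)}{\cal X}\cong{\cal X}$ acts by
\begin{equation}
    d\Phi_w(v) \;=\; \Bigl(\,t\mapsto \sum_{i=1}^{n}\frac{\partial x(t;w)}{\partial w^i}\,v^i\,\Bigr).
\end{equation}
Its kernel consists of those $v\in\mathbb{R}^n$ for which this curve vanishes identically on $[0,T]$, which by hypothesis~(\ref{eq:immersion}) forces $v=0$. Thus $d\Phi_w$ is injective, with finite-dimensional (hence closed and complemented) image in ${\cal X}$, so $\Phi$ is an immersion in the Hilbert-manifold sense.

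Finally, I would invoke the principle recalled in Section~\ref{sec:iande}: compactness of ${\cal W}$ promotes an injective immersion to an embedding, because $\Phi$ is then automatically a homeomorphism onto its image (a continuous injection from a compact space into a Hausdorff space is closed, hence a topological embedding). Combined with the immersion property, this yields a smooth embedding, and therefore ${\cal X}_{\cal W}=\Phi({\cal W})$ is an $n$-dimensional smooth submanifold of ${\cal X}$. The main subtlety, and the step requiring the most care, is the last one: the lemma from Section~\ref{sec:iande} was stated for maps between finite-dimensional manifolds, whereas here the codomain ${\cal X}$ is infinite-dimensional. The argument carries over because the image of $d\Phi_w$ is finite-dimensional and thus always closed and complemented in the Hilbert space ${\cal X}$, which is exactly the condition under which the Banach-manifold version of ``compact injective immersion $\Rightarrow$ embedding'' applies without modification.
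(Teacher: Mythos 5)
Your proof takes essentially the same route as the paper's, whose entire argument is ``the map $x(t;\cdot):{\cal W}\to{\cal X}$ is an injective immersion by~(\ref{eq:immersion}), and compactness of ${\cal W}$ upgrades this to an embedding, so ${\cal X}_{\cal W}$ is an embedded $n$-dimensional manifold.'' You simply supply the details the paper leaves implicit --- smoothness of the parametrization into the Hilbert space ${\cal X}$, the kernel computation for the differential, and the observation that the finite-dimensional (hence closed and complemented) image of the differential is what lets the compact-injective-immersion-implies-embedding principle carry over to the infinite-dimensional codomain --- all of which are correct.
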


\begin{proof}
    A smooth map $x(t; \cdot): {\cal W} \to {\cal X}$ is an injective immersion by (\ref{eq:immersion}). Since ${\cal W}$ is compact, the mapping is an embedding (i.e., ${\cal X}_{{\cal W}}$ is an embedded manifold in ${\cal X}$). 
\end{proof}

Given a parametric curve manifold ${\cal X}_{\cal W}$ embedded in ${\cal X}$, the inner product $\langle \cdot, \cdot \rangle_x$ in ${\cal X}$ can be naturally projected into ${\cal X}_{{\cal W}}$. This leads to -- by treating ${\cal W}$ as a local coordinate space for ${\cal X}_{{\cal W}}$ -- the Riemannian metric in ${\cal X}_{{\cal W}}$ expressed in the parameter space ${\cal W}$, denoted by $H(w)=\{h_{ij}(w)\}$. Specifically, the squared length of an infinitesimal displacement $dw \in \mathbb{R}^{n}$ is
\begin{align}
    ds^2 
    &= \sum_{i,j} h_{ij}(w) dw^i dw^j \nonumber \\ 
    &= \int_{0}^T \langle \sum_{i} \frac{\partial x(t;w)}{\partial w^i} dw^i,  \sum_{j} \frac{\partial x(t;w)}{\partial w^j} dw^j \rangle_x \ dt\nonumber \\
    &=\sum_{i,j} \Big( \int_{0}^T \frac{\partial x(t; w)^T}{\partial w^i} G(x(t;w))\frac{\partial x(t; w)}{\partial w^j} \ dt \Big) dw^i dw^j.
\end{align}
Therefore, 
\begin{equation}
\label{eq:rm}
    H(w) = \int_{0}^T \big( \frac{\partial x(t; w)}{\partial w}\big)^T G(x(t;w))\frac{\partial x(t; w)}{\partial w} \ dt,
\end{equation}
where $\frac{\partial x(t; w)}{\partial w} \in \mathbb{R}^{m\times n}$. 
This method of metric construction follows the standard procedure in differential geometry. A similar procedure can be found in the construction of Fisher information Riemannian metrics in statistical manifolds~\cite{amari2016information,lee2022statistical}.

\subsection{Isometry and Coordinate-Invariant Distortion Measure}
\begin{figure}[!t]
    \centering
    \includegraphics[width=0.85\linewidth]{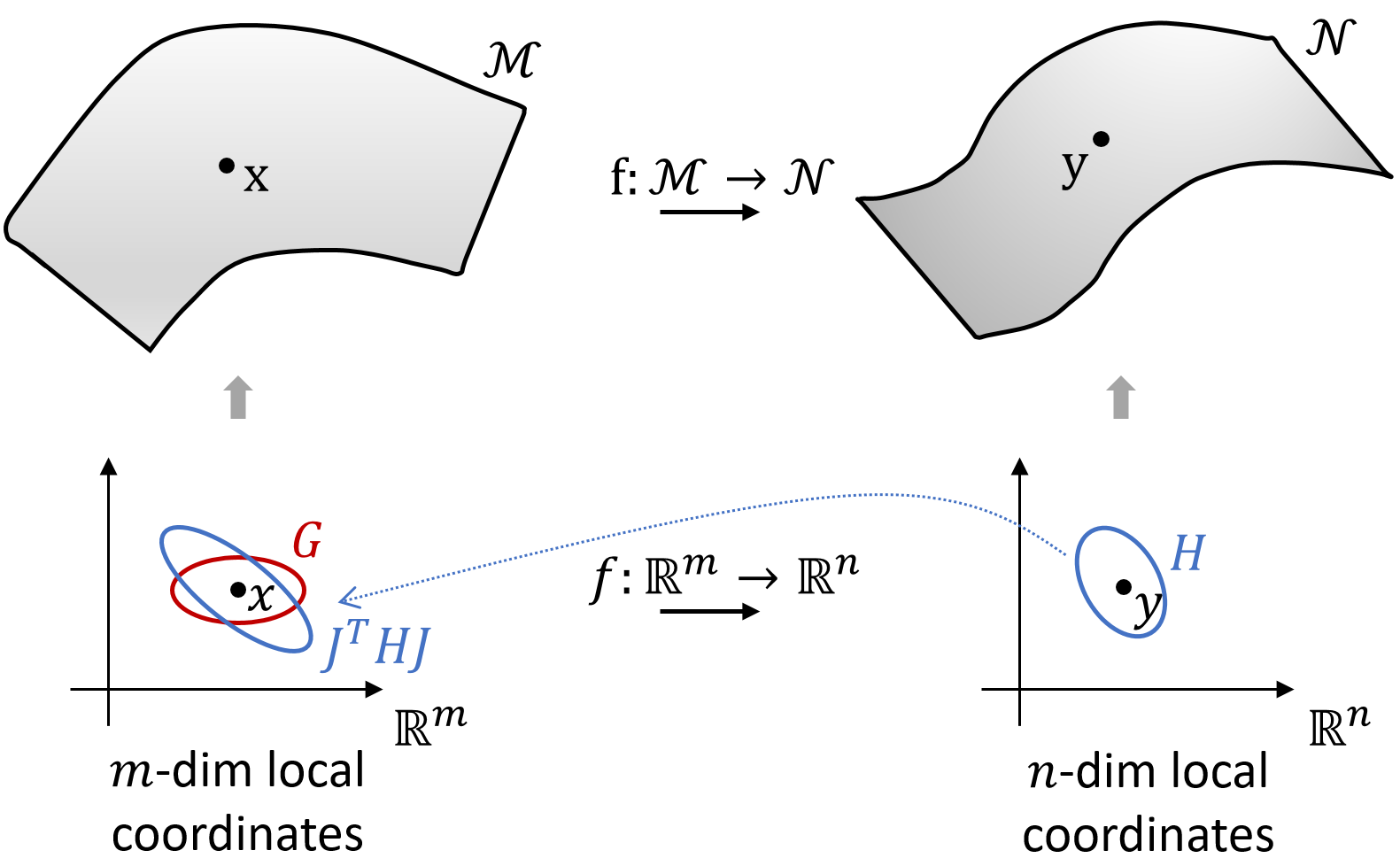}
    \caption{An illustration of a mapping between two Riemannian manifolds. If $G=J^THJ$, i.e., the red and blue ellipses coincide, at all points in $\mathbb{R}^{m}$, then the mapping $f$ is a global  isometry.}
    \label{fig:gp4}
    \vspace{-5pt}
\end{figure}

Consider two Riemannian manifolds, an $m$-dimensional manifold ${\cal M}$ with local coordinates $x\in\mathbb{R}^{m}$ and metric $G(x)$ and an $n$-dimensional manifold ${\cal N}$ with local coordinates $y\in\mathbb{R}^{n}$ and metric $H(y)$. And let $f:\mathbb{R}^{m} \to \mathbb{R}^{n}$ be a differentiable mapping between two manifolds expressed in local coordinates. 

We call $f$ an {\it isometry} if it preserves geometric structures between two spaces, i.e., preserves distances, angles, and volumes. Specifically, let $J(x)=\frac{\partial f}{\partial x}(x)$, if 
\begin{equation}
    G(x) = J(x)^T H(f(x)) J(x)
\end{equation}
at $x$, then $f$ is called a local isometry at $x$ (to see why, compare $dx^TG(x)dx$ and $dy^T H(y)dy$ where $dy=J(x)dx$). If this condition is satisfied at all points in ${\cal M}$, then $f$ is a (global) isometry; see Fig.~\ref{fig:gp4}. 

Sometimes, it is too stringent to find an isometry between two spaces and better to ignore the scale of distances~\cite{yonghyeon2022regularized}. A mapping $f$ that satisfies the relaxed condition
\begin{equation}
\label{eq:lsic}
    G(x) = cJ(x)^T H(f(x)) J(x)
\end{equation}
for all $x$ and for some positive scalar $c$ is called a {\it scaled isometry}. It preserves, angles and scaled distances.

There is a family of coordinate-invariant Riemannian {\it distortion measures}, each of which is a functional of a mapping $f$ that measures how far $f$ from being an isometry~\cite{jang2021riemannian}. 
Let $\lambda_i$ be eigenvalues of $J^T H J G^{-1}$. One example is  
\begin{equation}
    \int_{\cal M} \|\lambda_i(x) - 1\|^2 \sqrt{\det G(x)} dx.
\end{equation}
This measure is coordinate-invariant\footnote{To see why, consider a pair of coordinate transformations $\psi: x\mapsto x'$ and $\phi: y \mapsto y'$. Let $\Psi = \frac{\partial \psi}{\partial x}$ and $\Phi=\frac{\partial \phi}{\partial y}$, then the Riemannian metrics transform via $G\mapsto \Psi^{-T} G \Psi^{-1}$ and $H\mapsto \Phi^{-T}H\Phi^{-1}$, while the Jacobian transforms via $J\mapsto \Phi J \Psi^{-1}$. Therefore, $J^T H J G^{-1} \mapsto \Psi^{-T}J^T H J G^{-1}\Psi^{T}$ and the eigenvalues remain unchanged.}, and note that if $\lambda_i(x)=1$ for all $x$, then the measure is zero and $f$ is an isometry.

A family of {\it relaxed distortion measures} quantifies how far $f$ from being a scaled isometry~\cite{yonghyeon2022regularized} within the support of a positive finite measure $\nu$ in ${\cal M}$. One of them is
\begin{equation}
\label{eq:rdm}
    \int_{\cal M} \|\frac{\lambda_i(x)}{\frac{1}{\nu({\cal M})}\int_{\cal M} \frac{1}{m}\sum_{i=1}^{m} \lambda_i(x) d\nu(x)} - 1\|^2 d\nu(x).
\end{equation}
This measure is coordinate-invariant, and note that if $\lambda_i(x)=c$ for all $x$ in the support of $\nu$ for some positive scalar $c$, then the measure is zero and $f$ is a scaled isometry in $\nu$ (i.e., equation (\ref{eq:lsic}) is satisfied at all points in the support of $\nu$). 

Given a probability measure $P$ in ${\cal M}$, restricting the relaxed distortion measure~(\ref{eq:rdm}) to the support of $P$ and removing the additive constant, it is proportional to 
\begin{equation}
\label{eq:rdmff}
    {\cal R}(f; P):=\frac{\mathbb{E}_{x \sim P} [{\rm Tr} \big((J^THJG^{-1})^2\big)]}{\mathbb{E}_{x\sim P}[{\rm Tr}\big(J^THJG^{-1}\big)]^2}.
\end{equation}
Given this trace-based expression, we can employ the Hutchinson stochastic trace estimator, i.e., $\text{Tr}(A) = \mathbb{E}_{v\sim {\cal N}(0, I)}[v^T A v]$, which facilitates an efficient implementation of isometric regularization. Further details can be found in~\cite{lee2023equivariant,lee2023geometric}.
\section{Isometric Motion Manifold Primitives++}
\label{sec:immp}
In this section, we begin with the limitations of the existing MMP framework that relies on discrete-time trajectory representations and propose Motion Manifold Primitives++ (MMP++), applying the MMP framework to the continuous-time parametric curve representations. Then we adopt the isometric regularization technique~\cite{yonghyeon2022regularized} with our proposed CuveGeom Riemannian metrics, and propose Isometric Motion Manifold Primitives++ (IMMP++).  

Throughout, we will consider an $n$-dimensional Riemannian configuration manifold ${\cal Q}$ with its coordinates $q \in Q \subset \mathbb{R}^{n}$ and the metric $G(q) = \{g_{ij}(q)\}$. 

\subsection{Discrete-Time MMPs and Their Limitations} 
Motion Manifold Primitives (MMP) framework learns a continuous manifold of trajectories, providing a mapping that maps a lower-dimensional latent value to the discrete-time trajectory of a fixed length~\cite{noseworthy2020task,lee2023equivariant}. 
A trajectory data is considered as a sequence of configurations denoted by $(q_1, \ldots, q_T)$ with a fixed length $T$ and treated as an element of the high-dimensional trajectory space $Q^T:=Q\times \cdots \times Q$. 

Assuming the given demonstration trajectory dataset ${\cal D}_{\rm traj} = \{(q^{i}_1,\ldots, q^{i}_T)\}_{i=1}^{N}$ lies on some lower-dimensional manifold, an autoencoder framework is adopted to learn this manifold and its coordinates. An autoencoder consists of an encoder $g:Q^T \to {\cal Z}$ and a decoder $f: {\cal Z} \to Q^T$, where ${\cal Z} = \mathbb{R}^{m}$ is a latent coordinate space. 
These two mappings are optimized to minimize the following trajectory reconstruction loss: denoting the reconstructed trajectory by $(\hat{q}^i_1, \ldots, \hat{q}^i_T) = f(g(q^{i}_1,\ldots, q^{i}_T))$, 
\begin{equation}
\label{eq:recon_loss_discrete}
    \frac{1}{N}\frac{1}{T} \sum_{i=1}^{N} \sum_{t=1}^T d^2_Q(q^i_t, \hat{q}^i_t),
\end{equation}
where $d_Q(\cdot,\cdot)$ is some distance metric in $Q$.  

Minimizing~(\ref{eq:recon_loss_discrete}) makes ${\cal D}_{\rm traj}$ approximately lie on the image of the decoder function $f$. 
As discussed in section~\ref{sec:iande}, if $f$ is injective, the Jacobian of $f$ is $m$ everywhere, and $f$ is diffeomorphic to its image, then the image of the decoder can be considered as an $m$-dimensional manifold embedded in $Q^T$. 
Then, the mappings $g, f$ with ${\cal Z}$ take the role of the coordinate chart.
Consequently, an autoencoder can be interpreted as learning the motion manifold and its coordinates, simultaneously. 

In practice, $g$ and $f$ are approximated using deep neural networks with smooth activation functions. By setting $m$ sufficiently low, much lower than $\text{dim}(Q^T)=nT$, empirical results imply that $f$ converges to satisfy the above conditions without enforcing them. However, choosing a large $m$ may drop the rank of the Jacobian of $f$~\cite{lee2023geometric}.

However, the nature of discrete-time trajectory representation leads to multiple limitations, as listed below.
First, because the reconstructed trajectories may not be smooth, additional smoothness regularization must be added to the reconstruction loss (\ref{eq:recon_loss_discrete}), requiring weight tuning. 
Second, temporal modulation is not possible, meaning we cannot modulate the speed of the trajectory. More importantly, generating a configuration at an arbitrary specified time is not possible, which makes one of our key applications -- online iterative replanning introduced later in Algorithm \ref{alg:OIR} -- inapplicable where the time variable needs to be continuously modified. 
Lastly, given some constraints on curves (e.g., via-points), generated trajectories cannot be enforced to satisfy these constraints. We may introduce an additional regularization term to the loss (\ref{eq:recon_loss_discrete}); however, this not only requires additional weight tuning but also does not guarantee the satisfaction of the constraints. Furthermore, smooth modulation of the trajectory is not allowed, such as by changing via-points, which limits the model's adaptability. For example, if we modulate a configuration at the first time step \(q_1\), since there is no continuity between adjacent points, the other points \(q_t\) for \(t > 1\) will remain unchanged.

\subsection{Motion Manifold Primitives++}
This section proposes MMP++, an extension of the MMP framework to the continuous-time parametric curve models.
We use a phase variable $\tau \in [0,1]$, and our main subject of interest is a parametric curve model
\begin{equation}
    q: [0,1] \times {\cal W} \to Q \:\:\: \text{s.t.} \:\:\: q(\tau, w) \in Q.
\end{equation}
Then, given any monotonically increasing function with time $\tau(t)$, a timed-trajectory $q(\tau(t);w)$ can be constructed with a desired velocity profile $\frac{d}{dt}q(\tau(t); w) = \dot{\tau_t}\frac{\partial}{\partial\tau}q(\tau;w)$. This is called a temporal modulation.

Specifically, we focus on a particular class of curve models, an affine curve model, that is expressed as follows:
\begin{equation}
\label{eq:affinemodel}
    q(\tau; w) = \psi(\tau) + w \phi(\tau),
\end{equation}
where $\psi(\tau) \in \mathbb{R}^{n}$, $\phi(\tau) \in \mathbb{R}^{B}$, and $w \in \mathbb{R}^{n \times B}$. This class includes models from ProMP~\cite{paraschos2013probabilistic} and VMP~\cite{zhou2019learning}. In VMP, $\psi(\tau)$ is referred to as an elementary trajectory, and $w\phi(\tau)$ is termed a shape modulation. 

We assume that we are provided with multiple demonstration trajectories for a given task, each of which is a sequence of time-configuration pairs $((t_1, q_1), \ldots, (t_L, q_L))$. 
In the pre-processing step, we fit each demonstration trajectory to the affine curve model~(\ref{eq:affinemodel}) and find $w$.
Specifically, we set $\tau_{\text{linear}}(t) = \frac{t}{t_L}$ and consider $\Delta_{i}:=q_{i} - \psi(\tau_{\text{linear}}(t_i))$. Then, we want to find $w$ that best fits the trajectory, i.e., $\min_{w} \sum_{i=1}^{L}\|\Delta_i - w\phi(\tau_{\text{linear}}(t_i)) \|$. Assuming $L > B$, there is a closed-form solution:
\begin{equation}
    w^* = \Delta \Phi^T (\Phi \Phi^T)^{-1},
\end{equation}
where the data matrix $\Delta=(\Delta_1, \ldots, \Delta_L) \in \mathbb{R}^{n \times L}$ and basis matrix $\Phi=(\phi(\tau_{\text{linear}}(t_1)), \ldots, \phi(\tau_{\text{linear}}(t_L)))\in \mathbb{R}^{B \times L}$.

Suppose we are given curve parameters fitted to the demonstration trajectories, denoted by $\{w_1, \ldots, w_N\}$ where $w_i \in \mathbb{R}^{n \times B}$ is a curve parameter fitted to an $i$-th trajectory. 
Adopting~\cite{noseworthy2020task, lee2023equivariant}, we use an autoencoder framework to learn the manifold and its coordinates. An autoencoder consists of an encoder $g:{\cal W} \to {\cal Z}$ and a decoder $f: {\cal Z} \to {\cal W}$, where ${\cal W} = \mathbb{R}^{n\times B}$ is the curve parameter space and ${\cal Z} = \mathbb{R}^{m}$ is a latent coordinate space. 
These two mappings are optimized to minimize the following standard autoencoder reconstruction loss:
\begin{equation}
\label{eq:recon_loss}
    \frac{1}{N} \sum_{i=1}^{N} \|w_i - f(g(w_i))\|_F^2,
\end{equation}
where $\|\cdot\|_F$ is the Frobenius norm.

Minimizing~(\ref{eq:recon_loss}) makes $\{w_i\}_{i=1}^{N}$ approximately lie on the image of the decoder function $f$. 
As a result, $f$ maps the latent coordinate space ${\cal Z}$ to a manifold in the curve parameter space ${\cal W}$, 
and then the parametric curve model $q(\tau; \cdot)$ maps this manifold to a manifold of continuous-time trajectories, i.e., the motion manifold, in the trajectory space. as visualized in Fig.~\ref{fig:intro2}.



Once $f, g$ are fitted, we train a latent space distribution using the encoded data $\{g(w_i)\}_{i=1}^{N}$. To capture the multi-modality of the distribution, we use a Gaussian Mixture Model (GMM), yet any other distribution fitting methods can be used.
We call this framework {\it Motion Manifold Primitives++ (MMP++)}, where `++' is added to distinguish it from vanilla MMP that uses discrete-time trajectory representations. 

One might question why, instead of adopting a two-step approach to learn the autoencoder and latent density separately, we don't utilize the Variational Autoencoder (VAE) framework directly~\cite{kingma2013auto}. While using a VAE is a feasible approach, we have found that separating manifold learning from density learning proves more effective. This is because, in some instances, the KL divergence in VAEs -- which penalizes differences between prior and posterior distributions in the latent space -- can adversely affect the quality of reconstructions.  

Another question that may arise is why we don't fit a density model directly in the curve parameter space \({\cal W}\). The primary reason is the high-dimensionality of the curve parameter space. For example, if the configuration space dimension is 7 and the number of basis functions \(B = 30\), then the dimensionality of \({\cal W}\) is 210. Learning a density directly in this high-dimensional space is often challenging and, as shown in our later experiments, performs worse than our methods that learn densities in much lower-dimensional latent spaces. More importantly, the high dimensionality of the curve parameter space makes it unsuitable for fast adaptation, such as the online iterative replanning introduced later in Algorithm 1. This is because the high dimensionality of \({\cal W}\) (i) requires a lot of data points for accurate density estimation\footnote{Any consistent estimator for $p$-times differentiable $d$-dimensional density functions converges at a rate of at most $n^{-\frac{p}{2p+d}}$ where \(n\) is the number of samples~\cite{stone1980optimal}. Therefore, when \(d\) is large, the rate is very slow.}, which is not the case in our situation, and (ii) makes the optimization insufficiently fast.

\subsection{Isometric Regularization} 
The MMP++ often produces a geometrically distorted latent coordinate space ${\cal Z}$. 
Adopting~\cite{yonghyeon2022regularized}, we would like to minimize the distortion between ${\cal Z}$ and $f({\cal Z}) \subset {\cal W}$ by adding the relaxed distortion measure~(\ref{eq:rdmff}) of the decoder mapping $f:{\cal Z} \to {\cal W}$ to the reconstruction loss function. 

We consider the latent space ${\cal Z}$ as a Riemannian manifold assigned with the identity metric $I \in \mathbb{R}^{m \times m}$, i.e., an $m$-dimensional Euclidean space. To apply the isometric regularization~\cite{yonghyeon2022regularized}, we should be able to interpret the output space ${\cal W}$ as a local coordinate space for the embedded manifold ${\cal X}_{\cal W} = \{\psi(\tau) + w\phi(\tau) \in {\cal X} \: | \: w \in {\cal W}\}$ (see section~\ref{sec:rgpc}). 
The space ${\cal X}_{\cal W}$ is an $nB$-dimensional smooth manifold, if $\phi_1(\tau), \ldots, \phi_B(\tau)$ are linearly independent:
\begin{prop}
    Suppose $\phi_1(\tau), \ldots, \phi_B(\tau)$ are linearly independent, i.e., if $a_1 \phi_1(\tau) + a_2 \phi_2(\tau) + \cdots + a_d \phi_d(\tau) = 0$ for all $\tau\in [0,1]$, then $(a_1, \ldots, a_d) = 0$. Then, the affine curve model (\ref{eq:affinemodel}) satisfies the injective immersion condition in Proposition~\ref{prop:manifold}.
\end{prop}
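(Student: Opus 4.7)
The plan is to verify directly the two requirements of Proposition~\ref{prop:manifold} for the affine curve model $q(\tau;w) = \psi(\tau) + w\phi(\tau)$: (i) injectivity of $w \mapsto q(\cdot;w)$, and (ii) the immersion condition~(\ref{eq:immersion}). Both will reduce to the hypothesized linear independence of $\phi_1,\ldots,\phi_B$ on $[0,1]$. A preliminary bookkeeping step is to reconcile notation: in Proposition~\ref{prop:manifold} the parameter is written as a vector $w=(w^1,\ldots,w^n)$, whereas here $w \in \mathbb{R}^{n\times B}$ is a matrix, so I will treat the $nB$ entries $\{w^{jk}\}_{1\le j\le n,\,1\le k\le B}$ as the local coordinates on ${\cal W}$.

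For injectivity, I would suppose $q(\tau;w_1)=q(\tau;w_2)$ for every $\tau\in[0,1]$. Subtracting $\psi(\tau)$ from both sides yields $(w_1-w_2)\phi(\tau)=0$ for all $\tau$. Reading this row by row, each row $r_j^T$ of $w_1-w_2$ satisfies $\sum_{k=1}^{B} r_j^k \phi_k(\tau)=0$ for all $\tau$. Linear independence of $\phi_1,\ldots,\phi_B$ immediately forces $r_j=0$ for each $j$, i.e., $w_1=w_2$.

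For the immersion condition, I would compute the partial derivatives in the natural matrix coordinates: $\partial q(\tau;w)/\partial w^{jk} = e_j\,\phi_k(\tau)$, where $e_j$ is the $j$-th standard basis vector of $\mathbb{R}^n$. Given a tangent vector $v=\{v^{jk}\}$, the immersion hypothesis becomes
\begin{equation}
\sum_{j=1}^{n}\sum_{k=1}^{B} v^{jk}\, e_j\, \phi_k(\tau) \;=\; \sum_{j=1}^{n} e_j\Bigl(\sum_{k=1}^{B} v^{jk}\phi_k(\tau)\Bigr) \;=\; 0 \quad \text{for all } \tau\in[0,1].
\end{equation}
Since $\{e_j\}$ is a basis of $\mathbb{R}^n$, each coefficient $\sum_{k} v^{jk}\phi_k(\tau)$ must vanish for all $\tau$, and linear independence of $\phi_1,\ldots,\phi_B$ then forces $v^{jk}=0$ for every $j,k$. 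This gives $v=0$, establishing~(\ref{eq:immersion}).

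Honestly, there is no serious obstacle here; the result is essentially a restatement of linear independence lifted from the scalar case to the $\mathbb{R}^n$-valued, matrix-parameter case. The only care needed is the notational one flagged above, namely identifying the $n\times B$ matrix $w$ with an $nB$-dimensional coordinate vector so that Proposition~\ref{prop:manifold} applies verbatim; once that identification is fixed, both the injectivity check and the rank computation are one-line consequences of the hypothesis.
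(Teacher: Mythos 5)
Your proposal is correct and follows essentially the same route as the paper's proof: both parts reduce to applying the linear independence of $\phi_1,\ldots,\phi_B$ componentwise (row by row in $\mathbb{R}^{n}$), after computing $\partial q/\partial w^{jk}=e_j\phi_k(\tau)$ for the immersion condition. The explicit remark about identifying the matrix parameter $w\in\mathbb{R}^{n\times B}$ with an $nB$-dimensional coordinate vector is a useful clarification but does not change the substance of the argument.
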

\begin{proof}
    Suppose $\psi(\tau) + w_1 \phi(\tau) = \psi(\tau) + w_2\phi(\tau)$ for all $\tau$, which implies that $(w_1 - w_2)\phi(\tau)=\sum_{j=1}^{B} (w_1-w_2)^{ij} \phi_j(\tau)=0$ for all $\tau$ and $i$. By the linearity, $w_1 = w_2$; the injectivity is proved. Now, suppose $\sum_{i,j} \frac{\partial q(\tau;w)}{\partial w^{ij}} v^{ij}=0$, which implies that $\sum_{j} v^{ij} \phi_j(\tau)=0$ for all $\tau$ and $i$. Similarly, by the linearity, $v=0$; thus the mapping $w \mapsto w \phi(\tau)$ is an immersion.  
\end{proof}

In existing movement primitives~\cite{paraschos2013probabilistic, zhou2019learning}, one of the standard methods for constructing $\phi(\tau)$ involves normalizing scalar-valued functions $b_i(\tau), i=1,\ldots,B$: $\phi_i(\tau) = b_i(\tau)/\sum_{j=1}^{B} b_j(\tau)$. With this construction, if $\{b_i\}$ is linearly independent and $\sum_j b_j(\tau) > 0$ for all $\tau$, then $\{\phi_i\}$ is linearly independent as well. We can construct such $\{b_i\}$ with the following proposition:
\begin{prop} (Corollary of Proposition 4.3. in \cite{paulsen2016introduction})
\label{prop:prop3}
Let $K : \mathbb{R} \times \mathbb{R} \to \mathbb{R}$ be a positive function and $\{c_1,c_2, \ldots, c_B\}$ be a finite set of mutually distinct points. Define $b_i(\tau) = K(\tau, c_i)$.
Then $\{b_i\}$ is linearly independent if and only if the matrix $(K(c_i, c_j ))_{i,j=1,...,B}$ is positive definite.
\end{prop}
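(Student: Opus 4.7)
The plan is to interpret ``positive function'' in the sense of Paulsen, i.e., a positive definite kernel on $\mathbb{R}\times\mathbb{R}$, which is the natural reading given that the conclusion involves positive definiteness of the Gram matrix and cites an RKHS textbook. Under this reading, for any finite set of points the matrix $\mathbf{K} := (K(c_i,c_j))_{i,j=1}^B$ is automatically positive semi-definite, and the question becomes one of rank: linear independence of the $b_i$ should be equivalent to $\mathbf{K}$ being non-singular, which for a PSD matrix is the same as being positive definite.

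For the $(\Leftarrow)$ direction I would proceed by pure evaluation. Suppose $\sum_{i=1}^B a_i b_i(\tau) = 0$ for all $\tau$. Substituting $\tau = c_j$ for each $j$ yields $\sum_{i=1}^B a_i K(c_j, c_i) = 0$, i.e., $\mathbf{K}\, a = 0$ where $a = (a_1,\ldots,a_B)^T$. Positive definiteness of $\mathbf{K}$ forces $a = 0$, giving linear independence of $\{b_i\}$. This direction is short and requires no RKHS machinery.

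For the $(\Rightarrow)$ direction I would invoke the RKHS $\mathcal{H}_K$ associated with $K$ (this is where Paulsen's Proposition 4.3 enters). The key identity is
\begin{equation}
\Bigl\| \sum_{i=1}^B a_i K(\cdot, c_i) \Bigr\|_{\mathcal{H}_K}^2 = \sum_{i,j=1}^B a_i a_j K(c_i,c_j) = a^T \mathbf{K}\, a.
\end{equation}
Suppose $\mathbf{K}$ is not positive definite; since it is PSD by hypothesis, it must be singular, so there is a nonzero $a$ with $a^T \mathbf{K}\, a = 0$. By the displayed identity, $\sum_i a_i K(\cdot, c_i) = 0$ in $\mathcal{H}_K$, hence identically as a function on $\mathbb{R}$ (evaluation is a bounded functional in an RKHS). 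But this is exactly $\sum_i a_i b_i \equiv 0$ with $a \neq 0$, contradicting linear independence.

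The main obstacle I anticipate is not the algebra but the terminological ambiguity: confirming that ``positive function'' indeed means ``positive definite kernel'' in the sense of Paulsen, since a literal reading (pointwise positive) would neither be sufficient nor produce a PSD Gram matrix for free. Once that is pinned down, the proof is essentially a one-line application of $\mathbf{K} a = 0$ in one direction and of the RKHS norm identity in the other; an alternative, entirely finite-dimensional argument for $(\Rightarrow)$ would factor $\mathbf{K} = L L^T$ via Cholesky and observe that $L a = 0$ implies $\sum a_i b_i = 0$ when the columns of $L$ are interpreted as coordinate vectors of the $b_i$ in a suitable basis, avoiding the RKHS altogether.
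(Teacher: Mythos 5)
The paper itself gives no proof of this proposition --- it is stated as a corollary of Proposition 4.3 in the cited RKHS textbook of Paulsen, so there is nothing internal to compare against. Your argument is correct and is essentially the standard proof behind that cited result: evaluation at the nodes gives $\mathbf{K}a=0$ and hence $a=0$ for the ``if'' direction, and the RKHS identity $\bigl\|\sum_i a_i K(\cdot,c_i)\bigr\|^2_{\mathcal{H}_K}=a^T\mathbf{K}a$ plus boundedness of point evaluation gives the ``only if'' direction. Your reading of ``positive function'' as ``positive semidefinite kernel'' is the right one, and it is genuinely needed: for a merely pointwise-positive $K$, e.g.\ $K(\tau,c)=1+(\tau-c)^2$, the functions $b_1,b_2$ are linearly independent while the Gram matrix has negative determinant, so the ``only if'' direction would be false under the literal reading. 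One minor caveat: the Cholesky variant you sketch does not actually avoid the RKHS, since interpreting the columns of $L$ as coordinate vectors of the $b_i$ presupposes an inner-product space in which the $b_i$ realize $\mathbf{K}$ as their Gram matrix --- which is precisely what Moore--Aronszajn supplies.
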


Consider the most standard choice of $b_i(\tau)$ for stroke-based movements, the Gaussian basis functions $b_i^G(\tau):=\exp(-\frac{(\tau - c_i)^2}{2h})$ where $h$ defines the width of basis and $c_i$ the center for the $i$-th basis. According to Proposition~\ref{prop:prop3}, if $\{c_1,c_2, \ldots, c_B\}$ is mutually distinct, then $\{b_i^G\}$ is linearly independent, because Gaussian kernel is positive definite.

For a smooth manifold ${\cal X}_{\cal W}$, we can now define a Riemannian metric expressed in coordinates $w\in{\cal W}$ using equation (\ref{eq:rm}). 
Since our parameter $w \in \mathbb{R}^{n \times B}$ is a matrix and has two indices $\{w^{ij}\}$, the Riemannian metric has four indices $h_{ijkl}(w)$ such that 
\begin{equation}
    ds^2 = \sum_{i,k=1}^{n} \sum_{j,l=1}^{B} h_{ijkl} (w) dw^{ij} dw^{kl}
\end{equation}
for $dw \in \mathbb{R}^{n \times B}$. Accordingly, we define a CurveGeom Riemannian metric in ${\cal W}$ as follows:
\begin{definition}
A {\bf CurveGeom Riemannian metric} for ${\cal X}_{\cal W}$ expressed in ${\cal W}$
is 
\label{eq:cgrm}
    \begin{equation}
        h_{ijkl}(w) = \int_{0}^1 \frac{\partial q(\tau; w)^T}{\partial w^{ij}} G(q(\tau;w)) \frac{\partial q(\tau; w)}{\partial w^{kl}} \ d \tau
    \end{equation}
    for $i,k=1,\ldots,n$ and $j,l=1,\ldots,B$.
\end{definition}

Given an affine curve model, the metric further simplifies to the following expression:
\begin{prop}
    Suppose $q(\tau; w) = \psi(\tau) + w\phi(\tau)$, then the CurveGeom Riemannian metric is 
    \begin{equation}
    \label{eq:hijkl}
        h_{ijkl}(w) = \int_{0}^{1} \phi_{j}(\tau) g_{ik}(q(\tau; w)) \phi_{l}(\tau) \: d\tau,
    \end{equation}
    for $i,k=1,\ldots,n$ and $j,l=1,\ldots,B$.  
\end{prop}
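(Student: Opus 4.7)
The plan is to substitute the affine form $q(\tau;w) = \psi(\tau)+w\phi(\tau)$ directly into the general CurveGeom definition (\ref{eq:cgrm}) and carry out a short component-level calculation of $\partial q/\partial w^{ij}$. Since the map $w\mapsto w\phi(\tau)$ is linear in $w$ and $\psi(\tau)$ does not depend on $w$, the derivative should collapse to a single scalar basis function times a standard basis vector, after which the integrand reduces algebraically to $\phi_j(\tau)\phi_l(\tau)g_{ik}(q(\tau;w))$.

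Concretely, I would first write $q(\tau;w)$ in components as $q^a(\tau;w)=\psi^a(\tau)+\sum_{b=1}^{B} w^{ab}\phi_b(\tau)$ for $a=1,\ldots,n$. Differentiating with respect to the scalar entry $w^{ij}$ gives $\partial q^a/\partial w^{ij} = \delta^a_i\,\phi_j(\tau)$, or equivalently $\partial q(\tau;w)/\partial w^{ij} = \phi_j(\tau)\,e_i$, where $e_i$ denotes the $i$-th standard basis vector of $\mathbb{R}^n$. The $\psi(\tau)$ term drops out because it is independent of $w$, and the $w\phi(\tau)$ term is linear, so only the single matrix entry $w^{ij}$ contributes.

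Substituting into the integrand of (\ref{eq:cgrm}), the bilinear form becomes $\phi_j(\tau)\,e_i^{T} G(q(\tau;w))\,e_k\,\phi_l(\tau)$. Since $e_i^{T} G\, e_k=g_{ik}$ simply picks out the $(i,k)$ entry of the metric matrix, and the scalar factors $\phi_j(\tau),\phi_l(\tau)$ can be pulled outside, the integrand reduces to $\phi_j(\tau)\,g_{ik}(q(\tau;w))\,\phi_l(\tau)$. Integrating over $\tau\in[0,1]$ yields the claimed formula (\ref{eq:hijkl}).

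I do not expect any real obstacle here; the statement is essentially a direct unpacking of matrix-index conventions, and the calculation is purely algebraic given the linearity of the affine model. The only point requiring some care is bookkeeping for the four indices $(i,j,k,l)$: one must recognize that differentiation with respect to the single scalar entry $w^{ij}$ simultaneously selects the $i$-th row direction in $\mathbb{R}^n$ and the $j$-th basis function in $\tau$, which is precisely what produces the cleanly factored form $\phi_j\,g_{ik}\,\phi_l$ under the integral sign.
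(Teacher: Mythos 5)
Your computation is correct and is essentially identical to the paper's own proof: both differentiate the affine model componentwise to get $\partial q^a/\partial w^{ij}=\delta^a_i\phi_j(\tau)$ and substitute into the CurveGeom definition, with your $e_i^T G e_k = g_{ik}$ step being just a notational variant of the paper's Kronecker-delta contraction. No gaps.
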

\begin{proof}
    Let us denote by $q = (q^1, \ldots, q^n)$. Then, $\frac{\partial q^a(\tau; w)}{\partial w^{ij}} = \frac{\partial }{\partial w^{ij}} (\sum_{ab}w^{ab}\phi_{b}) = \sum_{b}\delta_i^a \delta_j^b \phi_b = \delta_i^a \phi_j$. Therefore, the metric is $h_{ijkl} = \int \sum_{a,b} g_{ab} \delta_i^a \phi_j  \delta_k^b \phi_l \ d\tau = \int g_{ik} \phi_j \phi_l \ d\tau$.
\end{proof}

Now, we can compute the relaxed distortion measure of the decoder mapping $f:{\cal Z} \to {\cal W}$ using equation (\ref{eq:rdmff}). 
Since the metric for ${\cal Z}$ is the identity, we only need to compute $J^T H J$. 
Unlike the case in equation (\ref{eq:rdmff}), the Jacobian of our decoder $f=(f^{ij})_{i=1,\ldots,n,j=1,\ldots,B}$ is not a matrix, but has three indices $\frac{\partial f^{ij}}{\partial z^a}$ where $a=1,\ldots,m$. Therefore, instead of $J^T H J$, we can write it as follows:
\begin{equation}
    \bar{h}_{ab}(z) = \sum_{i,k=1}^{n} \sum_{j,l=1}^{B} \big(\frac{\partial f^{ij}}{\partial z}(z)\big)^T h_{ijkl}(f(z)) \frac{\partial f^{kl}}{\partial z}(z),  
\end{equation}
where $\frac{\partial f^{ij}}{\partial z} (z)\in \mathbb{R}^{1 \times m}$. We let $\{\bar{h}_{ab}(z)\}$ be an index notation of an $m \times m$ matrix $\bar{H}(z)$.
If $\bar{H}(z) = cI$ for some positive scalar $c$ for all $z$, then $f$ is a scaled isometry. 

We consider a latent space probability measure $P$ and finally define the relaxed distortion measure as 
\begin{equation}
    {\cal R}(f; P):=\frac{\mathbb{E}_{z \sim P} [{\rm Tr} \big(\bar{H}(z)^2\big)]}{\mathbb{E}_{z\sim P}[{\rm Tr}\big(\bar{H}(z)\big)]^2}.
\end{equation}
Following~\cite{yonghyeon2022regularized}, sampling from $P$ is done by $\delta z_i + (1-\delta) z_j$ where $\delta$ is uniformly sampled form $[-\eta, 1+\eta]$ (we set $\eta=0.2$ throughout) and $z_i = g(w_i)$ and $z_j = g(w_j)$ with $w_i,w_j \sim \{w_i\}_{i=1}^{N}$. The final loss function is 
\begin{equation}
     \frac{1}{L} \sum_{i=1}^{L} \|w_i - f(g(w_i))\|^2 + \alpha {\cal R}(f; P),
\end{equation}
where $\alpha$ is a regularization coefficient. 
Together with the density model fitted in the latent coordinate space, we call this framework {\it Isometric Motion Manifold Primitives++ (IMMP++)}. 

As a special case, if ${\cal Q}$ is Euclidean space, i.e., $g_{ij}(w)$ is equal to the Kronecker delta $\delta_{ij}$, then the metric formula and isometric regularization term can be further simplified. Plugging it into equation~(\ref{eq:hijkl}), the metric is simplified to 
\begin{equation}
    h_{ijkl} = \delta_{ik}\int_{0}^{1} \phi_j(\tau) \phi_l(\tau) \ d\tau.
\end{equation}
We note that it does not depend on $w$; therefore, we do not need to compute it in every iteration of the gradient descent during autoencoder training. This greatly reduces the computational cost in isometric regularization. Specifically, the matrix $\bar{H}(z)$ becomes
\begin{equation}
    \bar{h}_{ab}(z) = \sum_{i=1}^{n} \big(\frac{\partial f^{i}}{\partial z}(z)\big)^T \Phi \frac{\partial f^{i}}{\partial z}(z),  
\end{equation}
where $f^{i}=(f^{i1}, \ldots, f^{iB}) \in \mathbb{R}^{B}$, $\frac{\partial f^{i}}{\partial z}(z) \in \mathbb{R}^{B \times m}$, and $\Phi=(\int_{0}^{1}\phi_j(\tau) \phi_l(\tau) \ d\tau)_{j,l=1,\ldots,B} \in \mathbb{R}^{B \times B}$ is constant.
\section{Experiments}
In section~\ref{subsec:poam} and \ref{subsec:7racfm}, we assume Euclidean configuration spaces $Q$ and focus on examples with fixed initial and final points $q_i, q_f \in Q$, therefore we use the via-point affine curve model from VMP~\cite{zhou2019learning}: 
\begin{equation}
\label{eq:acme}
    q(\tau; w) = (1-\tau)q_i + \tau q_f + w \phi(\tau),
\end{equation}
where $\phi_i(\tau) = \tau(1-\tau) \ b_i^G(\tau) / \sum_{j} b_j^G(\tau)$ (it is trivial to show the linear independence of $\phi_i$; see Proposition~\ref{prop:prop3}).
To give hard constraints of initial and final points to $q(\tau; w)$, we multiply $\tau(1-\tau)$ to the original basis term from \cite{zhou2019learning}. 

One might wonder whether it is always necessary to design parametric curve models in an environment-specific manner. We note that the parametric curve model (\ref{eq:affinemodel}), with the choice of \(\psi(\tau)=0\) and $\phi_i(\tau)=\ b_i^G(\tau) / \sum_{j} b_j^G(\tau)$, is sufficiently expressive to model any smooth, arbitrary complex trajectory with a sufficiently large \(B\). Thus, no special modeling is required if we do not enforce any constraints on the curve.
If we have desired constraints, we can, in some cases, design suitable parametric models that guarantee these constraints are satisfied. For example, if we want \(q(\tau)\) to pass through \((\tau_i, q_i)\) and \((\tau_i, \dot{q}_i)\) for \(i = 1, \ldots, N\), we can design \(\psi(\tau)\) to be the lowest-order polynomial that satisfies \((\tau_i, q_i)\) and \((\tau_i, \dot{q}_i)\) for all \(i\), and set \(\phi(\tau) = \Pi_i (\tau - \tau_i)^2 b^G_j(\tau)/\sum_j b^G_j(\tau)\).

We compare MMP++, IMMP++, and VMP~\cite{zhou2019learning}. We utilize Gaussian Mixture Models (GMMs) to fit latent density models for MMP++ and IMMP++ unless otherwise specified. While the distribution of $w$ is assumed to be Gaussian in vanilla VMP, it has limitations in capturing multi-modal distributions. To ensure a fair comparison, we also implement VMP with GMM. When sampling new trajectories from fitted distributions, we reject the samples with likelihood values lower than the threshold value $\eta$ -- where $\eta$ is set to be the minimum value among the likelihood values of the training trajectories. 

In section~\ref{subsec:SE3}, we consider the position-orientation space 
\[
    {\rm SE}(3) := \{(p, R) \: | \: p \in \mathbb{R}^{3}, R \in {\rm SO}(3)\},
\]
where ${\rm SO}(3) := \{R \in \mathbb{R}^{3\times 3}\: | \: R^TR = I, \det(R)=1\}$ is the group of rotation matrices; $p$ and $R$ denote the position and orientation, respectively. In the position space $\mathbb{R}^{3}$, we use the affine curve model $p(\tau; w)$ in (\ref{eq:acme}). In the orientation space ${\rm SO}(3)$, we use a tailored parametric curve model $R(\tau; w)$, of which details will be given later in the corresponding section. 
In comparison to the MMP with discrete-time trajectory representations in~\cite{lee2023equivariant}, we show that our MMP++ enables additional modulations of initial and final positions and orientations. 

\subsection{Planar Obstacle-Avoiding Motions}
\label{subsec:poam}
\begin{figure}[!t]
    \centering
    \includegraphics[width=\linewidth]{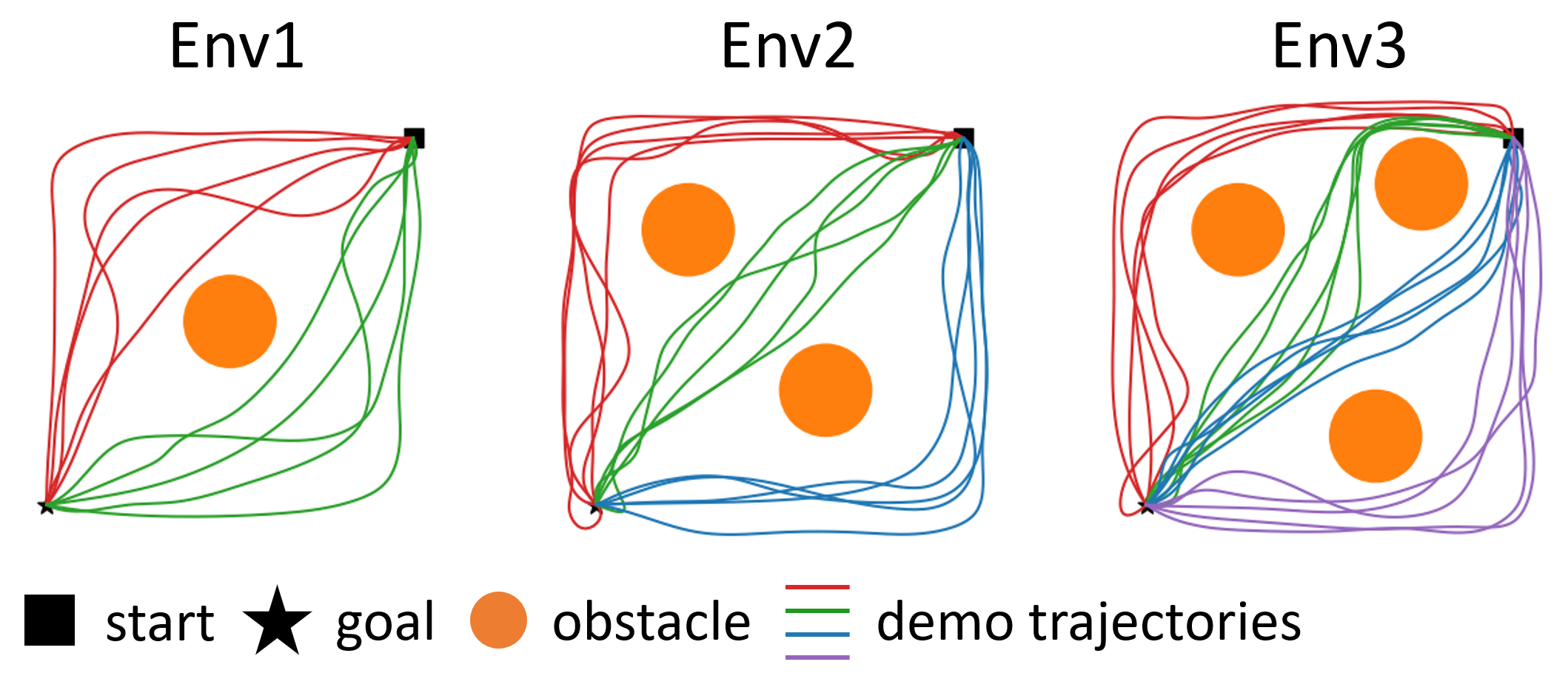}
    \caption{Three environments with training demonstration trajectories.}
    \label{fig:expsetting}
    \vspace{-5pt}
\end{figure}

We consider three different environments with different numbers of obstacles and obstacle-avoiding trajectories; see Env1, Env2, and Env3 in Fig.~\ref{fig:expsetting}. 
The number of total training trajectories for each environment setting is 10, 15, and 20, respectively. 
The number of basis $B=20$ for $\phi(\tau)$. In MMP++ and IMMP++. 
The latent space dimension is search among 2, 3, 4, and 5
The number of GMM components is set to be 2, 3, and 4 for Env1, Env2, and Env3, respectively.
A generated trajectory is considered successful if it does not collide with the obstacles. 

As shown in Table~\ref{tab:toy} and Fig.~\ref{fig:exp1}, the IMMP++ performs the best compared to the other methods. 
Sometimes, MMP++ fails significantly because GMM fits wrong clusters due to the geometric distortions in the latent coordinate spaces; see Fig.~\ref{fig:intro} for example latent spaces of MMP+ and IMMP++.

\begin{table}[!t]
    \centering
    \caption{Averages and standard deviations of the success rates with 5 times run with different random seeds; the higher the better. The best results are marked in bold.}
    \label{tab:toy}
    \begin{tabular}{l|ccc}
                         & Env1 & Env2 & Env3 \\ \hline
        VMP (Gaussian)   & 76.42 $\pm$ 1.34 & 65.76 $\pm$ 1.60 & 38.24 $\pm$ 1.18  \\
        VMP (GMM)        & 97.12 $\pm$ 0.51 & 97.52 $\pm$ 0.30 & 98.14 $\pm$ 0.24  \\
        MMP++ (ours)     & 99.48 $\pm$ 0.38 & 88.58 $\pm$ 0.60 & 97.76 $\pm$ 1.46 \\
        IMMP++ (ours)    & {\bf 100.00 $\pm$ 0.00} & {\bf 99.90 $\pm$ 0.12} & {\bf 99.22 $\pm$ 0.19}
    \end{tabular}
\end{table}

\begin{figure}[!t]
    \centering
    \includegraphics[width=1\linewidth]{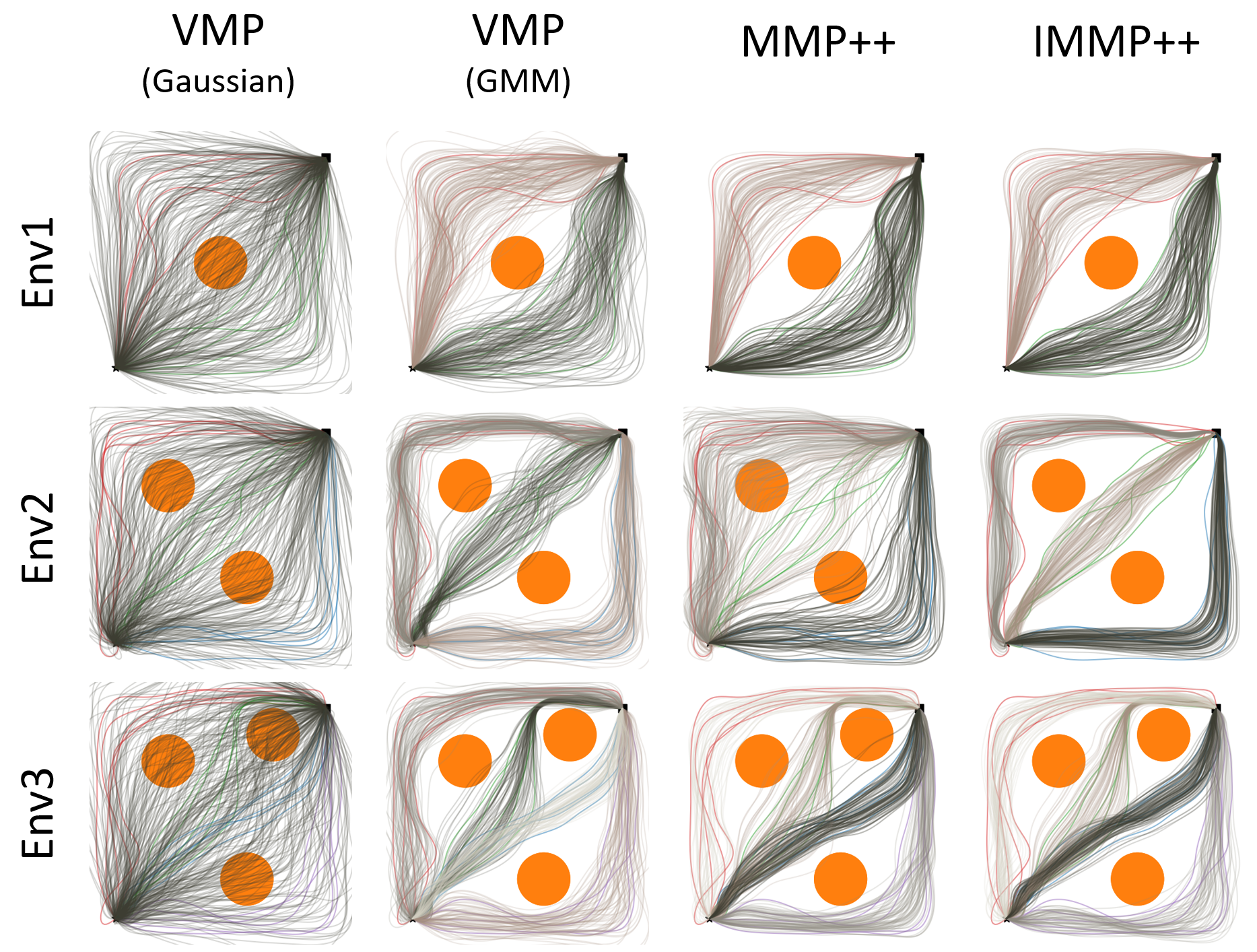}
    \caption{Trajectories generated by trained models. The GMM component numbers are 2, 3, and 4 for Env1, Env2, and Env3, respectively (samples from the same GMM component are assigned the same color).} 
    \label{fig:exp1}
    \vspace{-5pt}
\end{figure}

\subsection{7-DoF Robot Arm Collision-Free Motions}
\label{subsec:7racfm}
\begin{figure*}[!t]
    \centering
    \includegraphics[width=0.9\linewidth]{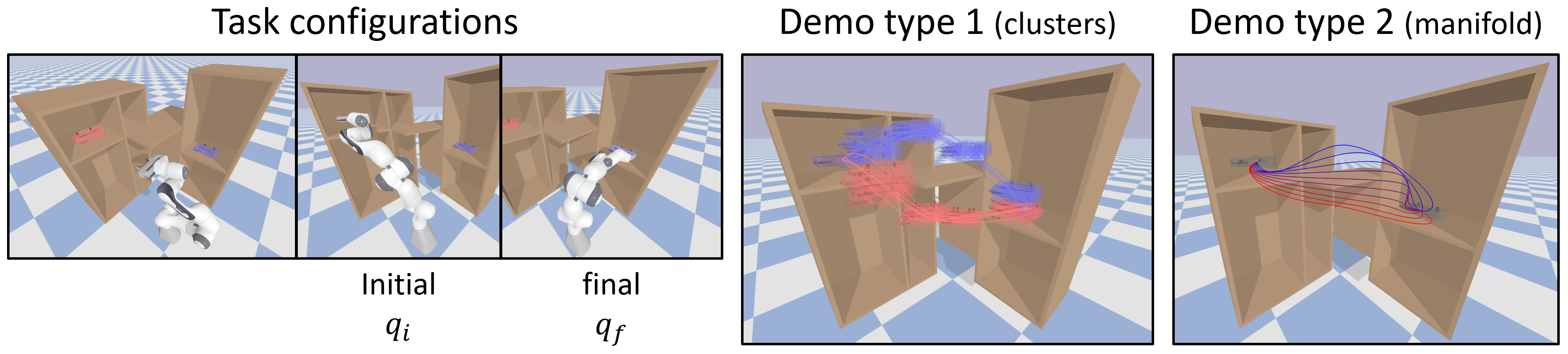}
    \caption{{\it Left}: A 7-DoF robot arm needs to move from the joint configuration $q_i$ to $q_f$ without colliding to the environment. {\it Right}: Two types of demonstration trajectories, where 7-dimensional joint space trajectories are given as demonstration data (only the end-effector's SE(3) or $\mathbb{R}^{3}$ trajectories are visualized). In demo type 1, two clusters of trajectories are given, and, in demo type 2, a one-dimensional manifold of trajectories is provided.} 
    \label{fig:robot_exp}
    \vspace{-5pt}
\end{figure*}

In this section, we consider collision-free point-to-point motions of a 7-DoF Franka Emika Panda robot arm in 
an environment shown in Fig.~\ref{fig:robot_exp} ({\it Left}). Two types of demonstration trajectories are given, each with 10 joint space trajectories -- in Fig.~\ref{fig:robot_exp} ({\it Right}), SE(3) and $\mathbb{R}^{3}$ forward kinematics results of them are visualized for simplicity --, where the initial and final configurations are identical as $q_i$ and $q_f$ in Fig.~\ref{fig:robot_exp} ({\it Left}). 
In the demo type 1, there are two clusters of trajectories, whereas in the demo type 2 a set of trajectories forms an 1-dimensional manifold, a trajectory manifold embedded in the trajectory space. The number of basis $B=20$ for $\phi(\tau)$. In MMP++ and IMMP++, the latent space dimensions are 2. The number of GMM components is 2. Exceptionally, for MMP++ and IMMP++ applied to the demo type 2, we use the Kernel Density Estimator (KDE) instead of the GMM (we will provide the reason later).

Table~\ref{tab:robot} shows the averages and standard deviations of the success rates. Overall, MMP++ and IMMP++ show the best performances.
In particular, VMPs show far inferior results than our manifold-based methods given a manifold of demonstrations in the demo type 2. This is because neither Gaussian nor GMM models are suitable for capturing the continuous manifold structure of the density's support.
Fig.~\ref{fig:robot_results} shows the two-dimensional latent spaces of the manifold-based methods. 
For demo type 1, the distance between clusters in the latent space of IMMP+ is greater than that in the latent space of MMP++. 
Even though MMP++ successfully captures correct clusters in this particular dataset, for more complex datasets, it is more likely to fail in capturing correct clustering structures.

For demo type 2, we note that two-dimensional encoded latent points form an one-dimensional manifold; 
it should be considered as a single connected manifold component. 
Given this manifold support, we found that it is sub-optimal to use the GMM for fitting a distribution. 
Hence, we use a non-parametric method, KDE, that can more accurately estimate our latent space density function:
\begin{equation}
\label{eq:KDE_density}
    p(z) = \frac{1}{N}\sum_{i=1}^{N}\frac{1}{2\pi|H_i|^{1/2}} \exp(- \frac{(z-z_i)^T H_i^{-1} (z-z_i)}{2}),
\end{equation}
where $\{z_i\}_{i=1}^{N}$ is the set of encoded latent points and $H_i,i=1,\ldots,N$ are positive-definite matrices. 
Let $K(z_i, z_k) = \exp(-\|z_i-z_k\|^2/h)$; for this example, we construct $H_i=\Sigma_i^2$ where
\begin{equation}
    \Sigma_i = \frac{\sum_{k=1}^{N} K(z_i, z_k) (z_i - z_k) (z_i - z_k)^T}{\sum_{k=1}^{N} K(z_i, z_k)}.
\end{equation}

One may ask if using such a non-parametric method in the curve parameter space ${\cal W}$ of VMP directly can lead to a better performance than using GMM. Unfortunately, this is challenging due to the high-dimensional nature of ${\cal W}$. It is worth noting that this approach is feasible in MMP++ and IMMP++ because we utilize sufficiently low-dimensional latent spaces.

\begin{table}[!t]
    \centering
    \caption{Averages and standard deviations of the success rates with 5 times run with different random seeds; the higher the better. The best results are marked in bold.}
    \label{tab:robot}
    \begin{tabular}{c|cccc}
        Demo & \begin{tabular}{c} VMP \\ (Gaussian) \end{tabular} &  \begin{tabular}{c} VMP \\ (GMM)\end{tabular} & \begin{tabular}{c} MMP++ \\ (ours) \end{tabular} & \begin{tabular}{c} IMMP++ \\ (ours) \end{tabular} \\ \hline
         type 1 & 46.1 $\pm$ 4.37 & 97.1 $\pm$ 1.66 & 98.6 $\pm$ 0.20 & {\bf 99.5 $\pm$ 0.77} \\
         type 2 & 79.4 $\pm$ 2.94 & 83.0 $\pm$ 3.78 & {\bf 99.3 $\pm$ 0.68}  & 98.2 $\pm$ 0.68  \\
    \end{tabular}
    \vspace{-5pt}
\end{table}

\begin{figure}[!t]
    \centering
    \includegraphics[width=1\linewidth]{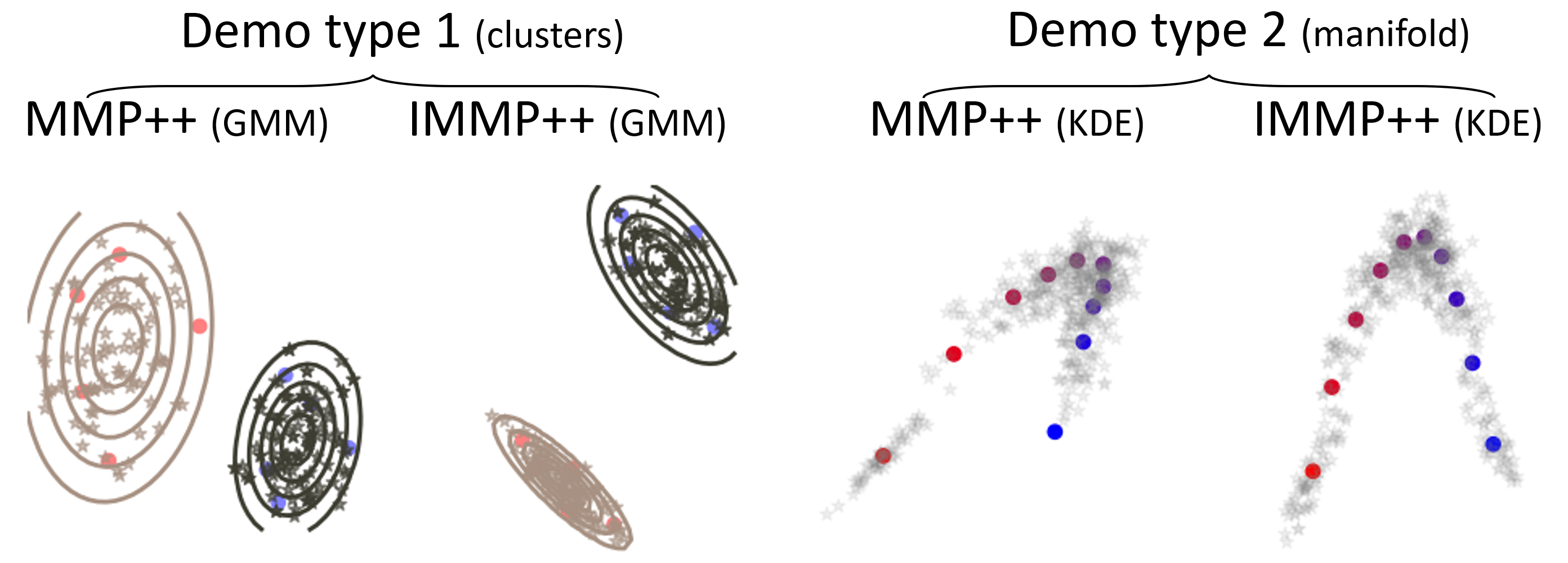}
    \caption{The two-dimensional latent spaces of our manifold-based methods (MMP++ and IMMP++) are illustrated (circular points are encoded points and star-shaped points are generated samples). Demonstration trajectories in Fig.~\ref{fig:robot_exp} ({\it Right}) are encoded into these latent spaces, with colors indicating correspondence (e.g., a red trajectory is encoded into a red point).} 
    \label{fig:robot_results}
    \vspace{-5pt}
\end{figure}

Next, we qualitatively show the latent coordinates and via-points modulation results of IMMP++ trained with the demo type 2 (manifold) in Fig.~\ref{fig:manifol_results}. Note that we have the model $q(\tau;f(z))=(1-\tau)q_i + \tau q_f + f(z)\phi(\tau)$ with the trained decoder function $w=f(z)$, where modulations in the latent coordinate value $z$ and the initial and final configurations $q_i$ and $q_f$ lead to corresponding changes in the resulting trajectory $q(\tau)$. In Fig.~\ref{fig:manifol_results} ({\it Upper}), a continuous change in the latent value $z$ leads to a smooth transition of $q(\tau)$ from an upward-moving path to a downward-moving path. As shown in Fig.~\ref{fig:manifol_results} ({\it Middle} and {\it Lower}), continuous changes in $q_i$ and $q_f$ induce smooth changes of $q(\tau)$. 

\begin{figure*}[!t]
    \centering
    \includegraphics[width=0.85\textwidth]{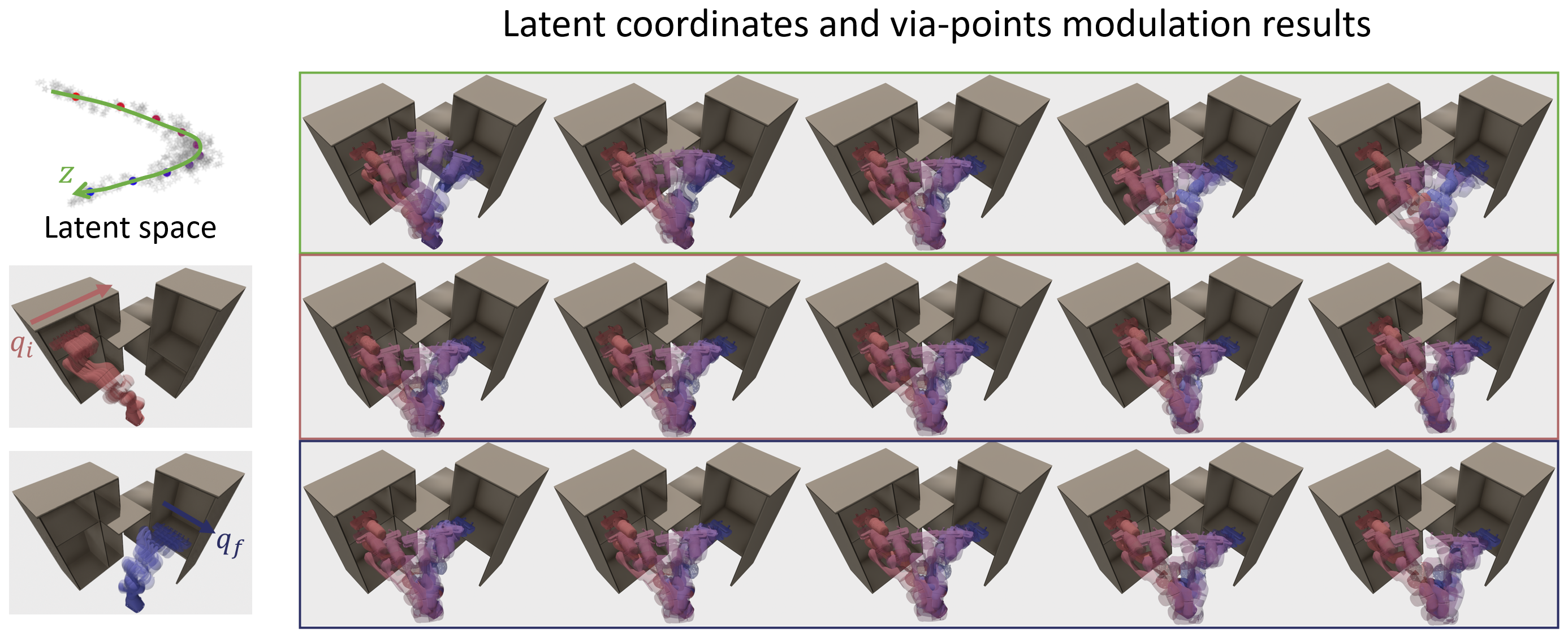}
    \caption{Modulations of latent coordinates $z$ and initial and final configurations $q_i$ and $q_f$ in IMMP++ $q(\tau;f(z))=(1-\tau)q_i + \tau q_f + f(z)\phi(\tau)$. {\it Upper}: A continuous change in $z$ results in a smooth transition of $q(\tau)$. {\it Middle}: Changes in $q_i$ induce smooth transitions of $q(\tau)$. {\it Lower}: Changes in $q_f$ induce smooth transitions of $q(\tau)$.} 
    \label{fig:manifol_results}
    \vspace{-5pt}
\end{figure*}

Lastly, we show the adaptability of our primitive models in the presence of unseen constraints and propose a novel online iterative re-planning algorithm. We will consider $(z, \tau)$ as a state. Let $T$ be a total time length. First of all, suppose we do not have additional unseen constraints. We start with a random initial $z \sim p(z)$ and $\tau=0$. Then, as the time $\delta t$ passes, $z$ remains constant and $\tau$ is updated to $\min(\tau + \delta t/T, 1)$. At each time step, we use $q(\tau; f(z))$ as a desired joint configuration and run a position controller with the control frequency $f_c$. This procedure is just a position tracking control given an initially-planned trajectory $q(\tau; f(z))$. 

Now, consider a situation where an obstacle that blocks the initially-planned trajectory appears and moves as the time flows as shown in Fig.~\ref{fig:MPC}. We assume constraints are given as inequality equations $C(q) \leq 0$ in the configuration space ${\cal Q}$, which can change dynamically in time. Suppose our current state is $(z,\tau)$. We set a time window $t_w$ and if $C(q(\bar{\tau}; f(z)) > 0$ for some $\tau < \bar{\tau} < \tau + t_w/T$, we re-plan the trajectory, i.e., update and find a new desired state $(z',\tau')$, with the re-planning frequency $f_p < f_c$. 
Given a new desired state $(z', \tau')$, we will update $z \xleftarrow{} z + k (z'-z)$ and $\tau \xleftarrow{} \tau + k (\tau'-\tau)$ with positive gain $k$ at control frequency $f_c$ for time $1/f_p$, and then decide whether we will re-plan $(z,\tau)$ again or not. Considering this update rule, in the re-planning step, we find $(z',\tau')$ by solving the following optimization problem:
\begin{align}
\label{eq:mpc_sa}
    \min_{(z', \tau')}  \: \|z - &z'\|^2  + \alpha \|\tau - \tau'\|^2 \\
    {\rm such} \: {\rm that} \:\: & {\rm (1)} \:\: C(q(\bar{\tau}'; f(z')) \leq 0 \nonumber \\ 
    & \:\:\:\:\:\:\:\:\: \tau' < \bar{\tau}' < \tau' + t_w/T \nonumber \\
    & {\rm (2)} \:\: \log p(z') \geq \epsilon \nonumber \\ 
    & {\rm (3)} \:\:  C(q(\tau_{\eta}; f(z_\eta)) \leq 0 \:\: \& \:\: \log p(z_\eta) \geq \epsilon \nonumber \\ 
    & \:\:\:\:\:\:\:\:\: z_\eta=\eta z + (1-\eta) z' \nonumber \\ 
    & \:\:\:\:\:\:\:\:\: \tau_\eta = \eta \tau + (1-\eta)\tau' \nonumber \\
    & \:\:\:\:\:\:\:\:\: \eta \in [0,1] \nonumber \\
    & {\rm (4)} \:\: \tau' \in [\tau - \delta, \tau] \nonumber, 
\end{align}

where the weight $\alpha > 0$ is set to be big enough to prioritize updating $z$. (1) enforces the planned trajectory from $(z',\tau')$ to satisfy the constraints for time $t_w$, (2) prevents $z'$ from being out-of-distribution in the latent space ($\epsilon$ is the log probability density threshold; we set it to be the minimum value among the log likelihood values of the training data), (3) guarantees the linear path connecting the current and next states from $(z,\tau)$ to $(z',\tau')$ satisfies the constraints and lies within the in-distribution region in the latent space, and (4) allows $\tau' \in [\tau-\delta, \tau]$ -- where $\delta$ is the max travel-back time -- in case no feasible $z'$ exists when $\tau' = \tau$. 
A pseudocode for our online iterative re-planning algorithm is provided in Algorithm~\ref{alg:OIR}.
\begin{algorithm}[!t]
\caption{Online Iterative Trajectory Re-planning with MMP++}
\label{alg:OIR}
\KwIn{MMP++ decoder $w=f(z)$, parametric curve $q(\tau; w)$, latent density $p(z)$, inequality constraint $C(q)\leq 0$, time window $t_w$, total time length $T$, gain $k>0$, control frequency $f_c$, re-planning frequency $f_p$, and optimization solver SOLVE (\ref{eq:mpc_sa}) }
{\bf Initialization:} $z \sim p(z)$, $\tau=0$, $c_v={\rm False}$ \\
\While{True}{
Once every $1/f_p$ seconds: \\
  \eIf{$C(q(\bar{\tau}; f(z))) > 0$ for some $\bar{\tau} \in [\tau, \min(\tau + t_w/T, 1)]$}{
    $(z_g, \tau_g) \xleftarrow[]{}$ SOLVE (\ref{eq:mpc_sa}) \\
    $c_v={\rm True}$
  }{
    $c_v={\rm False}$
  }
Once every $1/f_c$ seconds: \\
  \eIf{$c_v$}{
    $z \xleftarrow[]{} z + k(z_g - z)$ \\ 
    $\tau \xleftarrow[]{} \tau + k(\tau_g - \tau)$
  }{
    $z \xleftarrow{} z$ \\
    $\tau \xleftarrow[]{} \min(\tau + 1/(f_cT), 1)$
  }
Position control input: $q(\tau; f(z))$
}
\end{algorithm}


We can efficiently solve the optimization (\ref{eq:mpc_sa}) thanks to the low-dimensionality of the optimization variable $(z', \tau')$. 
In the example in Fig.~\ref{fig:MPC}, we use IMMP++ trained with the demo type 2 (manifold). As visualized in Fig.~\ref{fig:MPC} ({\it Upper-Right}), given a dynamically changing constraint (i.e., moving spherical obstacle), the robot iteratively re-plans its trajectory whenever it is predicted to collide with the obstacle within the time interval $t_w$. A gradient-free sampling-based optimization approach is fast enough and shows a successful online iterative re-planning result. Fig.~\ref{fig:MPC} ({\it Lower}) shows $\|z\|$ and $\tau$ as functions of time, indicating when $(z,\tau)$ is re-planned. It can be confirmed that at around $t=2$ the robot predicted collisions with the obstacle within $t_w=1$ and re-planned $(z, \tau)$ multiple times.

\begin{figure}[!t]
    \centering
    \includegraphics[width=0.9\linewidth]{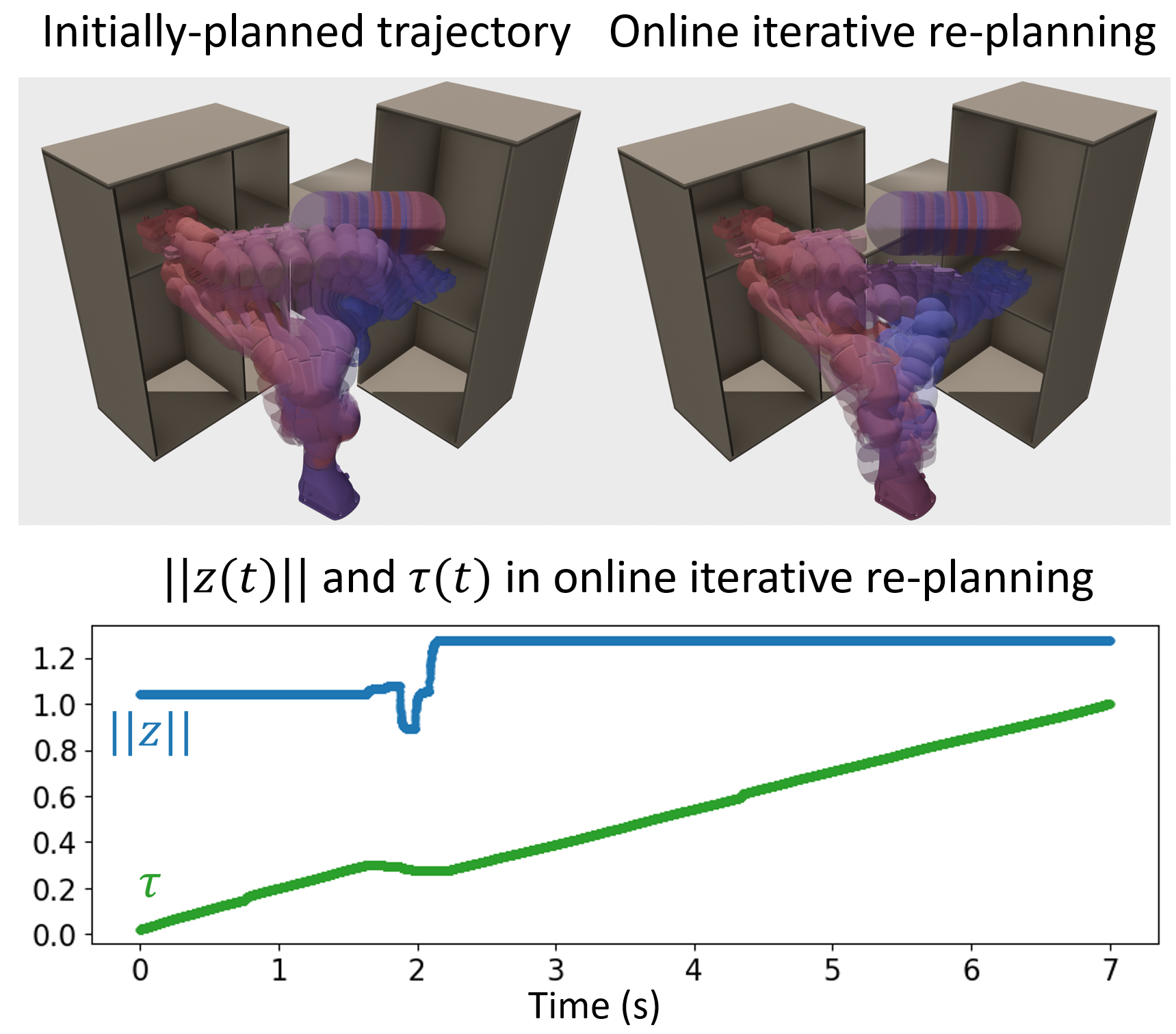}
    \caption{{\it Upper-Left:} Initially-planned trajectory is blocked by the unseen obstacle during training. {\it Upper-Right:} The robot iteratively re-plans its trajectory whenever it is predicted to collide with the obstacle. {\it Lower:} $\|z\|$ and $\tau$ as functions of time $t$ in online iterative re-planning ({\it Upper-Right}), where $T=5$, $t_w=1$, $f_c=1000$, and $f_p=10$.} 
    \label{fig:MPC}
\end{figure}

\begin{table}[!t]
    \centering
    \small
    \caption{Computation times for planning via RRT-Connect~\cite{kuffner2000rrt} and IMMP++ are measured with three times run. We consider a shelf-only environment, the shelf with one spherical obstacle, and the shelf with two spherical obstacles; see Fig.~\ref{fig:IMMP_vs_RRT}. Computations are performed using an AMD Ryzen 9 5900x 12-Core Processor and an NVIDIA GeForce RTX 3090. All code is implemented in Python.}
    \label{tab:IMMP_vs_RRT}
    \begin{tabular}{l|cc}
    & RRT-connect~\cite{kuffner2000rrt} & IMMP++ (ours) \\ \hline
    Shelf & $1.97 \sim 4.81 \  s$ & $0.006 \pm 0.001 \ s$\\ 
    Shelf + One obstacle & $60.41 \sim 134.36 \ s$ & $0.039 \pm 0.001 \ s$ \\ 
    Shelf + Two obstacles & $45.97 \sim 204.56 \ s$ & $0.077 \pm 0.002 \ s$ \\ 
    \end{tabular}
\end{table}

\begin{figure}[!t]
    \centering
    \includegraphics[width=1\linewidth]{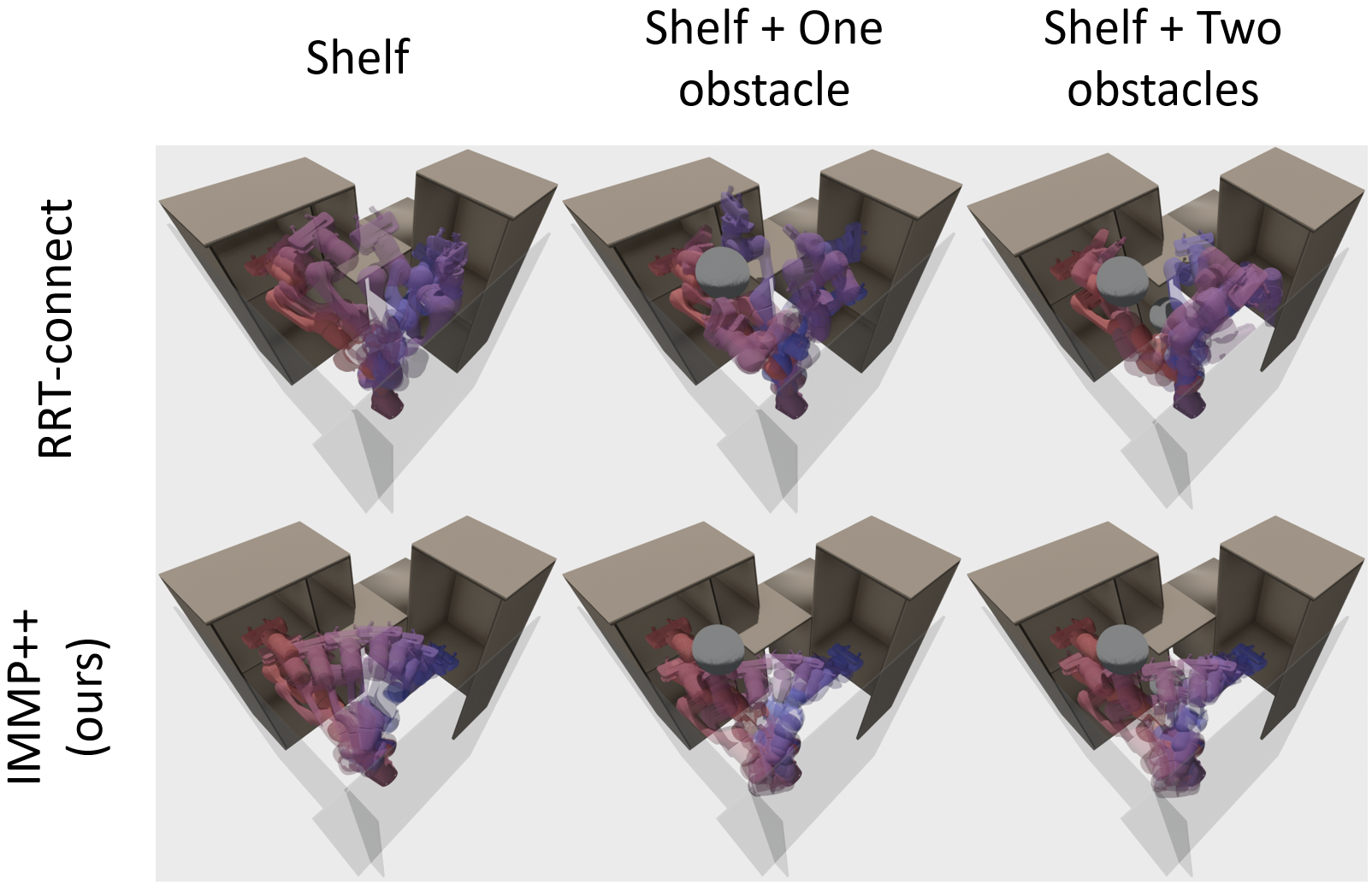}
    \caption{Comparison of collision-free paths planned by RRT-connect~\cite{kuffner2000rrt} and IMMP. Two hypothetical walls, shown as transparent gray, are considered to limit the robot's workspace to the area close to the shelf.}
    \label{fig:IMMP_vs_RRT}
    \vspace{-5pt}
\end{figure}

One can reasonably ask, if we have access to the robot kinematics and the geometry of the environment, what is the advantage of using our method compared to conventional sampling-based collision-free path planning methods, such as RRT variants? Our method offers two important advantages. First, our motion manifold primitives already contain paths that are collision-free for a part of the environment, so only the newly introduced environmental constraints need to be considered during planning. For example, in Fig.~\ref{fig:manifol_results}, the paths in the learned motion manifold do not collide with the shelf. Therefore, when planning within the motion manifold, exact knowledge of the shelf's geometry is unnecessary. While existing sampling-based planning methods require knowledge of all constraints, we only need to consider the new environmental constraints, such as obstacles shown in Fig.~\ref{fig:MPC}.

Second, a more significant advantage is that, while conventional RRTs are typically not fast enough for online dynamic planning, our method is very fast. Table~\ref{tab:IMMP_vs_RRT} shows the computation times for planning using RRT-connect~\cite{kuffner2000rrt} and our IMMP++ in the shelf-only environment and with one and two sphere obstacles, as visualized in Fig.~\ref{fig:IMMP_vs_RRT}. RRT-connect takes several seconds even in the simple shelf-only environment, and the time increases significantly as the complexity of the collision-free configuration space grows due to obstacles. In contrast, our method is relatively much faster. The trajectory sampling in the motion manifold itself takes 0.006 seconds, and the collision checking for obstacles takes only a few milliseconds. Our current implementation is in Python and is sub-optimal. We anticipate that implementing it in a more optimal language would result in even faster performance. Although the RRT algorithm could also be optimized for speed, it is unlikely to become fast enough and suitable for online dynamic planning.

However, these advantages of our algorithm do not come for free. Our method also has limitations. If the learned motion manifold primitives are too small, meaning the generated trajectories are not sufficiently diverse, a collision-free path may not exist on the motion manifold when excessive environmental constraints are introduced. The diversity of the learned motion manifold heavily depends on the diversity of the demonstration data, making the creation of good demonstration trajectories a very important issue.

\subsection{MMP++ for SE(3) Trajectory Data}
\label{subsec:SE3}

In this section, we show how to extend our MMP++ framework to SE(3) trajectory data. 
We will denote the position by $p\in\mathbb{R}^{3}$ and the orientation by $3\times 3 $ rotation matrix $R\in\mathbb{R}^{3\times 3}$. Let $\phi_i(\tau) = \tau(1-\tau) \ b_i^G(\tau) / \sum_{j} b_j^G(\tau)$ for $i=1,\ldots, B$. For the position trajectory, we use the via-point model 
\begin{equation}
p(\tau;w_p) = (1-\tau) p_i + \tau p_f + w_p \phi(\tau),    
\end{equation}
where $w_p \in \mathbb{R}^{3 \times B}$ is the position curve parameter and $p_i,p_f \in \mathbb{R}^{3}$ are initial and final points. 
For the orientation trajectory, we use the following parametric model:
\begin{equation}
\label{eq:rotation_vm}
    R(\tau; w_R) = R_i \exp ( \tau \log (R_i^T R_f))\exp ([w_R\phi(\tau)]),
\end{equation}
where $w_R \in \mathbb{R}^{3 \times B}$ is the orientation curve parameter, $R_i, R_f \in {\rm SO}(3)$ are initial and final orientations, and $\exp(\cdot)$, $\log(\cdot)$ are matrix exponential and logarithm, and 
\[
[(w^1, w^2, w^3)] = \begin{bmatrix}
    0 & -w^3 & w^2 \\ w^3 & 0 & -w^1 \\ -w^2 & w^1 & 0 
\end{bmatrix}
\] is a skew symmetric matrix~\cite{lynch2017modern}.
The orientation via-point model in (\ref{eq:rotation_vm}) is constrained to have the initial and final orientations $R_i$ and $R_f$.

\begin{figure}[!t]
    \centering
    \includegraphics[width=0.85\linewidth]{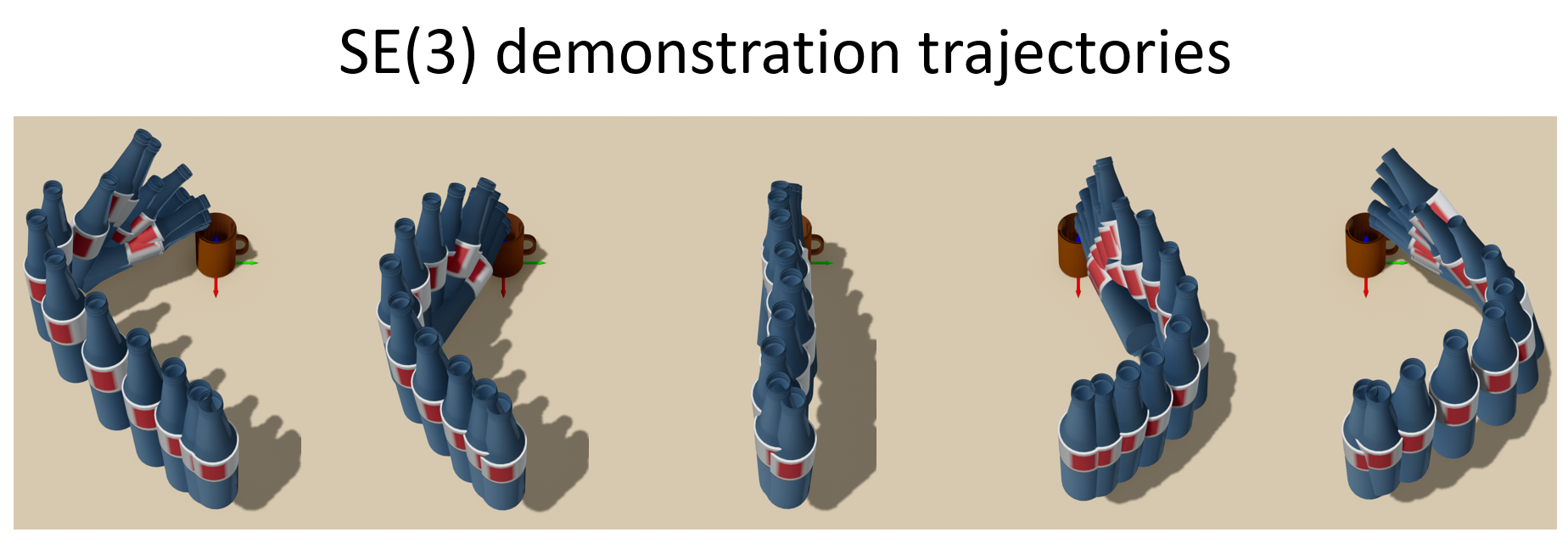}
    \caption{Water-pouring SE(3) trajectory dataset.} 
    \label{fig:wp_dataset}
    \vspace{-5pt}
\end{figure}

\begin{figure*}[!t]
    \centering
    \includegraphics[width=0.85\textwidth]{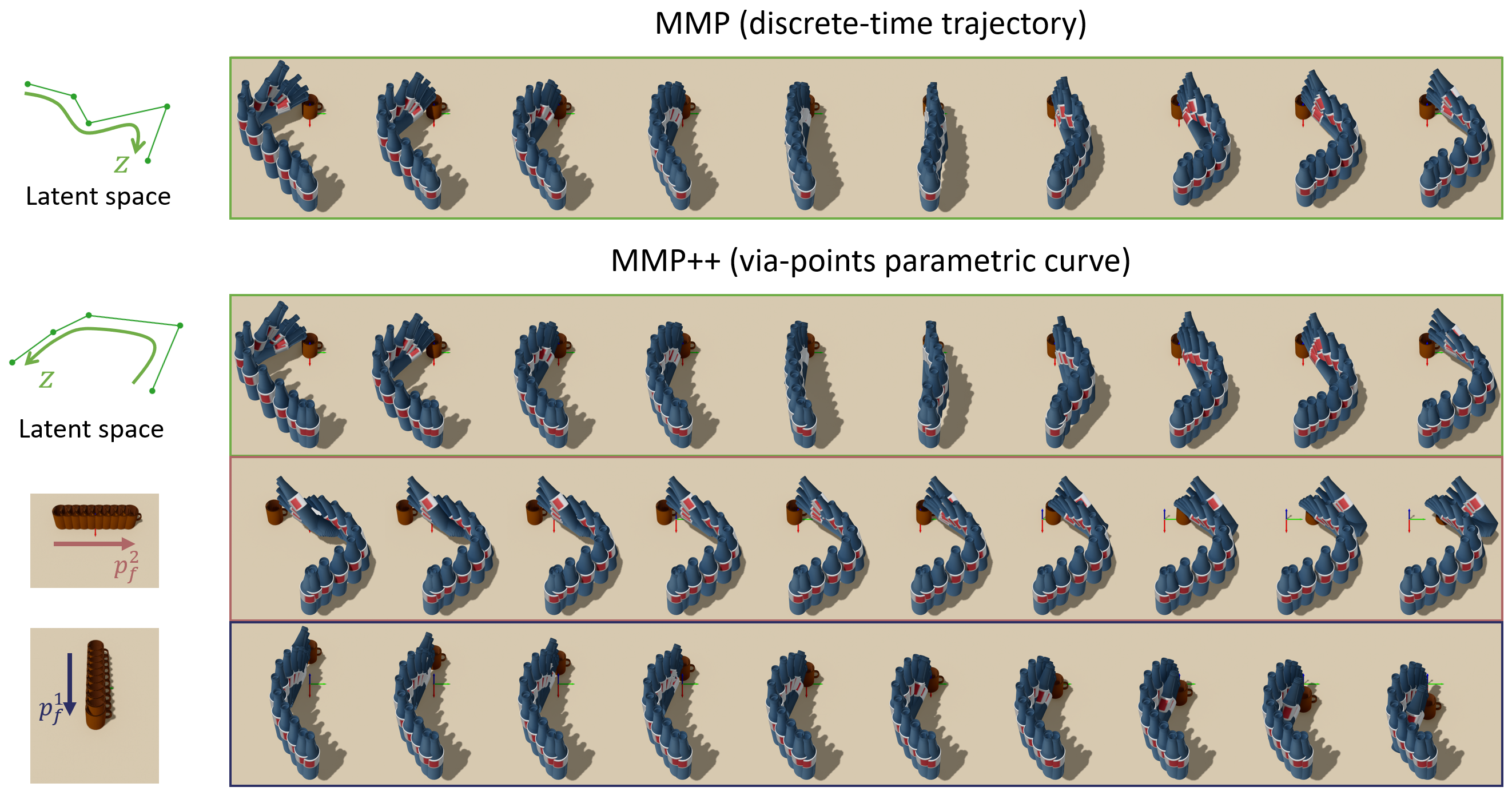}
    \caption{The modulation of MMP trained to generate discrete-time trajectories and MMP++ trained to generate curve parameters of the trajectories. We can modulate latent values for both methods and generate smooth transitions of the SE(3) trajectories. In MMP++ ({\it Middle} and {\it Lower}), given continuous changes in the cup position, the water-pouring trajectories undergo smooth transitions.} 
    \label{fig:wp_results}
    \vspace{-5pt}
\end{figure*}

\begin{figure}[!t]
    \centering
    \includegraphics[width=0.9\linewidth]{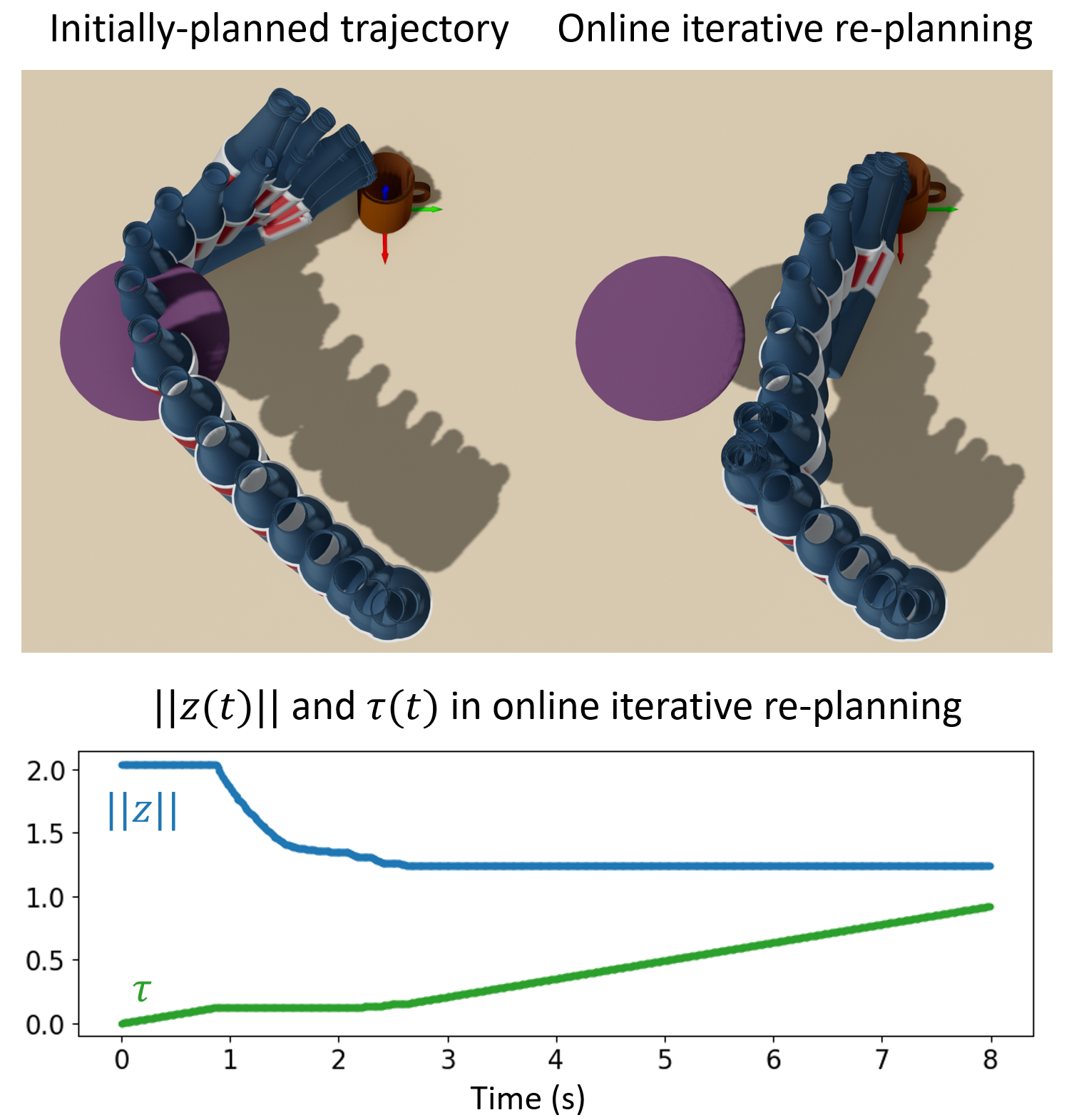}
    \caption{{\it Upper-Left:} Initially-planned trajectory is blocked by the unseen obstacle during training. {\it Upper-Right:} We iteratively re-plan the bottle's SE(3) trajectory whenever it is predicted to collide with the obstacle. {\it Lower:} $\|z\|$ and $\tau$ as functions of time $t$ in online iterative re-planning ({\it Upper-Right}), where $T=7$, $t_w=0.2$, $f_c=100$, and $f_p=10$.} 
    \label{fig:MPC2}
    \vspace{-5pt}
\end{figure}

Consider the water-pouring SE(3) trajectory dataset in Fig.~\ref{fig:wp_dataset}. Our goal is to train an MMP++ with this dataset. 
First, note that the five trajectories in the dataset have the same initial SE(3) pose $(p_i, R_i)$ but have different final SE(3) poses $(p_f, R_f)$. Therefore, in addition to $(w_p, w_R)$, the final SE(3) pose $(p_f, R_f)$ is added to the curve parameter. To emphasize this, we denote the parametric curves by $p(\tau; w_p, p_f)$ and $R(\tau; w_R, R_f)$. Thus, our autoencoder $g,f$ is trained by using $\{(w_p, w_R, p_f, R_f)_i\}_{i=1}^{5}$ where each of these parameters is fitted to the demonstration trajectory. 

To construct neural network encoder and decoder, some cares need to be taken since $R_f\in{\rm SO}(3)$ is not a vector data. For the encoder $g: (w_p, w_R, p_f, R_f) \mapsto z$, we can flatten $R_f$ to a 9-dimensional vector and treat $(w_p, w_R, p_f, R_f)$ as a $6B+12$-dimensional vector input. For the decoder $f:z \mapsto (w_p, w_R, p_f, R_f)$, the orientation output $R_f$ should satisfy the rotation matrix constraints. To enforce this condition, we let $f:z \mapsto (w_p, w_R, p_f, w_f)$ where $w_f \in \mathbb{R}^{3}$ and map $w_f \mapsto \exp([w_f]) \in {\rm SO}(3)$.

For autoencoder training, we may use the L2 reconstruction loss in the parameter space $(w_p, w_R, p_f, R_f) \in \mathbb{R}^{6B + 12}$ as in (\ref{eq:recon_loss}). 
However, we found that this loss does not suitably capture the difference between the training and reconstructed SE(3) trajectories, resulting in an autoencoder with poor reconstruction qualities. Therefore, we use the following loss function: 
let $(\hat{w}_p, \hat{w}_R, \hat{p}_f, \hat{R}_f) = f\circ g(w_p,w_R,p_f,R_f)$ and $(p_\tau, R_\tau)$ be the SE(3) demonstration trajectory -- where $\tau$ denotes the normalized time parameter (i.e., for a total demonstration time length $T$, $\tau=t/T$) --, denoting the trajectory dataset by ${\cal D}=\{(p_\tau, R_\tau)_{i}\}_{i=1}^{5}$, the new reconstruction loss is    
\begin{align}
    \sum_{(p_\tau,R_\tau) \in {\cal D}} \int_{0}^{1} & \|p_\tau - p(\tau; \hat{w}_p, \hat{p}_f)\|^2  \nonumber \\ 
    & + \beta \| \log (R_\tau^T R(\tau; \hat{w}_R, \hat{R}_f)) \|_F^2 \: d\tau,
\end{align}
where $\beta>0$ and the integration over $\tau$ is approximately computed by the discrete sum.  

Fig.~\ref{fig:wp_results} shows the modulation results of the MMP trained with discrete-time trajectories as in~\cite{lee2023equivariant} and MMP++ with our parametric curve models. 
Both methods enable modulation of the latent value $z$, producing continuous transitions of water-pouring trajectories from left to right. 
Furthermore, MMP++ enables modulation of the initial and final poses $p_i,R_i,p_f,R_f$. As shown in Fig.~\ref{fig:wp_results} (MMP++: {\it Middle} and {\it Lower}), MMP++ produces pouring trajectories that adapt to changes in the cup position.

Lastly, we apply the online iterative re-planning algorithm using the water-pouring MMP++ in the presence of unseen obstacle. We use the re-planning algorithm in Algorithm~\ref{alg:OIR}. 
Similar to the robot arm experiment with the demo type 2 (manifold), we use the KDE density model (\ref{eq:KDE_density}). 
While the initially-planned SE(3) trajectory of the bottle collides with the purple spherical obstacle as shown in Fig.~\ref{fig:MPC2} ({\it Upper-Left}), the bottle's SE(3) trajectory is re-planned online when it is predicted to collide with the obstacle and successfully avoids the collision as shown in Fig.~\ref{fig:MPC2} ({\it Upper-Right}). Fig.~\ref{fig:MPC2} ({\it Lower}) shows $\|z\|$ and $\tau$ as functions of time, indicating when $(z,\tau)$ is re-planned. It can be confirmed that, from $t=1$ to $t=2.5$, collisions with the obstacle within the time interval $t_w=0.2$ are predicted and $(z, \tau)$ is re-planned multiple times.

\section{Conclusion and Discussion}
We have proposed a novel model of movement primitives based on the motion manifold hypothesis, named Motion Manifold Primitives++ (MMP++). This model synergizes the strengths of the existing MMP with those of conventional parametric curve representation-based primitive models, achieving enhanced motion generation accuracy and adaptability. Moreover, to overcome the geometric distortion issue in MMP++, we have introduced a CurveGeom Riemannian metric for the parametric curve space and presented Isometric Motion Manifold Primitives++ (IMMP++). This approach ensures that the latent space preserves the geometric structure of the motion manifold, which, in some instances, leads to significantly improved density fitting results within the latent space.

We highlight that the low dimensionality of the latent parametrization of the motion manifold has led to multiple advantages. This allows us to exploit non-parametric density fitting techniques, such as kernel density estimation, in the latent space, which are typically challenging to apply in high-dimensional spaces, for example, directly in the curve parameter space ${\cal W}$. Additionally, re-planning has been easily accomplished by finding a new latent value $z$, formulated as an optimization problem that is efficiently solvable thanks to the low dimensionality of the latent space.

We have extended MMP++ to accommodate matrix Lie group data, specifically SO(3), by designing a parametric curve model that utilizes the exponential map, logarithm map, and group action. Given that the water-pouring SE(3) trajectory dataset forms a connected manifold without a clustering structure, MMP++ -- even without isometric regularization -- has shown satisfying performance. However, in instances where we encounter a dataset with clustering structures, where a distorted latent space could result in poor density fitting, as verified in the other 2-DoF and 7-DoF robot experiments, employing isometric regularization alongside an appropriate Riemannian metric can offer a solution. Although developing isometric regularization for matrix Lie group data falls outside the scope of our paper, it represents an interesting direction for future research.

Although our focus has been on via-point models, we can adopt other parametric curve models with stronger inductive biases. Even more sophisticated movement primitives, such as those based on dynamical systems, can be used, for example, by making the parameters of the dynamical systems the outputs of the neural network, as done in~\cite{bahl2020neural}. These can potentially be combined with the autoencoder-based manifold learning technique. These combinations will yield a variety of manifold-based motion primitives capable of generating diverse motions, demonstrating high adaptability in previously unseen dynamic environments.

Lastly, we note that our current motion manifold primitives are not conditioned on vision inputs such as images, point clouds, or sequences of images, so the learned manifold of trajectories is applicable only to the trained environment, such as the specific shelf in Fig.~\ref{fig:manifol_results}. If the MMP model could generalize to environments it has not encountered during training, it would be extremely powerful. For example, if arbitrary shelf geometries were given as vision inputs, and the model could generate a manifold of diverse trajectories conditioned on those inputs, it would significantly enhance its applicability. Adopting techniques (e.g., network architectures) from recent LfD methods for visuomotor policy learning~\cite{chi2023diffusion, lee2024behavior} is a feasible direction to extend our framework to generate a manifold of trajectories conditioned on vision inputs. For example, the decoder can be modified to take a vision feature \(y\) along with the latent variable \(z\), so that the decoder maps \((z, y)\) to the curve parameter \(w\). Although this would require a relatively large amount of data paired with visual observations, our core idea remains applicable.

\section*{Acknowledgments}
Yonghyeon Lee was the beneficiary of an individual grant from CAINS supported by a KIAS Individual Grant (AP092701) via the Center for AI and Natural Sciences at Korea Institute for Advanced Study.




 
%

\bibliographystyle{IEEEtran}
\bibliography{ref}

\section*{Biography}
 

\vspace{-33pt}
\begin{IEEEbiography}[{\includegraphics[width=1in,height=1.25in,clip,keepaspectratio]{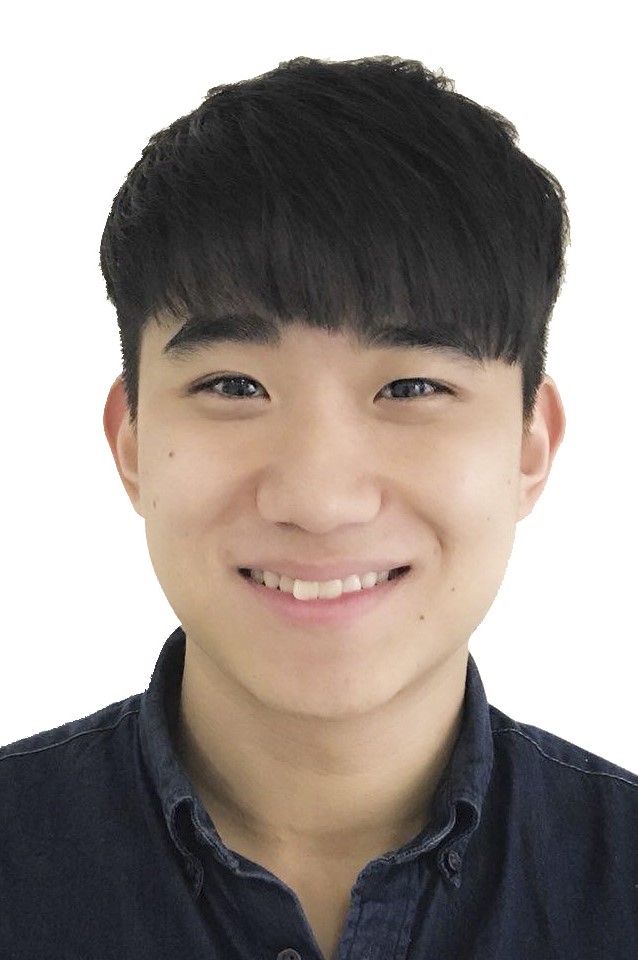}}]{Yonghyeon Lee} (Member, IEEE) received his B.S. degree in Mechanical Engineering in 2018 and his Ph.D. degree in Mechanical Engineering in 2023 from Seoul National University. He has been an AI Research Fellow at the Center for Artificial Intelligence at the Korea Institute for Advanced Study since 2023. His research interests include representation learning, robot learning, deep learning, dynamics, planning and control, vision and image processing, and geometric machine learning.

\end{IEEEbiography}




\end{document}